\relax
\documentclass[letterpaper]{article}
\usepackage{aaai21}
\usepackage{times}
\usepackage{helvet}
\usepackage{courier}
\usepackage[hyphens]{url}
\usepackage{graphicx}
\urlstyle{rm}

\usepackage{graphicx}
\usepackage{natbib}
\usepackage{caption}
\frenchspacing  
\setlength{\pdfpagewidth}{8.5in}  
\setlength{\pdfpageheight}{11in}

\usepackage[ruled,noline,noalgohanging]{algorithm2e}
\usepackage{stmaryrd}
\usepackage{tikz}
\usetikzlibrary{arrows,positioning,decorations.markings}
\tikzset{
  varnode/.style={rectangle,outer sep=0mm},
  varnodenoperi/.style={rectangle,outer sep=-1mm},
  ourarrow/.style={>=stealth}, 
  ourarc/.style={>=stealth,thick,arc}}
\usepackage{amsthm}
\usepackage{amssymb}

\newtheorem{mylem}{Lemma}
\newtheorem{mydef}{Definition}
\newtheorem{myenc}{Encoding}
\newtheorem{mythm}{Theorem}

\newtheorem{myeg}{Example}
\newtheorem{myconj}{Conjecture}

\usepackage{latexsym}
\usepackage[inline]{enumitem}
\setlist[enumerate]{label=(\roman*)}
\usepackage[Euler]{upgreek}
\usepackage{tabularx}
\usepackage{mathtools}
\usepackage{mathtools}

\pdfinfo{
/Title (Computing Plan-Length Bounds Using Lengths of Longest Paths)
/Author (Mohammad Abdulaziz, Dominik Berger)
/Keywords (State Space Diameter, Causal Graph, Plan Length Bounds)
/TemplateVersion (2021.2) 
}

\title{Computing Plan-Length Bounds Using Lengths of Longest Paths}
\author{}

\author{Mohammad Abdulaziz and Dominik Berger\\
}
\affiliations{Techniche Universit\"at M\"unchen, Germany}

\begin{document}

\maketitle
\begin{abstract}
We devise a method to exactly compute the length of the longest simple path in factored state spaces, like state spaces encountered in classical planning.
Although the complexity of this problem is NEXP-hard, we show that our method can be used to compute practically useful upper-bounds on lengths of plans.
We show that the computed upper-bounds are significantly (in many cases, orders of magnitude) better than bounds produced by previous bounding techniques and that they can be used to improve the SAT-based planning.
\end{abstract}

\providecommand{\insts}{}
\renewcommand{\insts}{\ensuremath{\Delta}}
\providecommand{\inst}{\ensuremath{\tvsal}}
\newcommand{\act}{\ensuremath{\pi}}
\newcommand{\asarrow}[1]{\vec{#1}}
\renewcommand{\vec}[1]{\overset{\rightarrow}{#1}}
\newcommand{\as}{\ensuremath{\vec{{\act}}}}
\newcommand{\asb}{\ensuremath{\vec{{\act_2}}}}

\newcommand{\etc}{\textit{etc.}}
\newcommand{\versus}{\textit{vs.}}
\newcommand{\ie}{i.e.}
\newcommand{\Ie}{I.e.}
\newcommand{\eg}{e.g.}
\newcommand{\michael}[1]{\textcolor{blue}{M: #1}}
\newcommand{\abziz}[1]{\textcolor{brown}{#1}}
\newcommand{\sublist}[2]{ \ensuremath{#1} \preceq\!\!\!\raisebox{.4mm}{\ensuremath{\cdot}}\; \ensuremath{#2}}
\newcommand{\subscriptsublist}[2]{\ensuremath{#1}\preceq\!\raisebox{.05mm}{\ensuremath{\cdot}}\ensuremath{#2}}
\newcommand{\PLS}{\Pi^\preceq\!\raisebox{1mm}{\ensuremath{\cdot}}}
\newcommand{\PLScharles}{\Pi^d}
\newcommand{\execname}{\mathsf{ex}}
\newcommand{\IndHyp}{\mathsf{IH}}
\newcommand{\exec}[2]{#2(#1)}

\newcommand{\ancestorssymbol}{\textsf{\upshape ancestors}}
\newcommand{\ancestors}{\ancestorssymbol}
\newcommand{\satpreas}[2]{\ensuremath{sat_precond_as(s, \as)}}
\newcommand{\proj}[2]{\ensuremath{#1{\downharpoonright}_{#2}}}
\newcommand{\dep}[3]{\ensuremath{#2 {\rightarrow} #3}}
\newcommand{\deptc}[3]{\ensuremath{#2 {\rightarrow^+} #3}}
\newcommand{\negdep}[3]{\ensuremath{#2 \not\rightarrow #3}}
\newcommand{\leavessymbol}{\textsf{\upshape leaves}}
\newcommand{\leaves}{\leavessymbol}

\newcommand{\childrensymbol}{\textsf{\upshape children}}
\newcommand{\children}[2]{\mathcal{\childrensymbol}_{#2}(#1)}
\newcommand{\succsymbol}{\textsf{\upshape succ}}
\newcommand{\succstates}[2]{\succsymbol(#1, #2)}
\newcommand{\concat}{\#}
\newcommand{\RG}{\cite{Rintanen:Gretton:2013}\ }
\newcommand{\KG}{Kovacs' grammar}
\newcommand{\cupdot}{\charfusion[\mathbin]{\cup}{\cdot}}
\newcommand{\cuparrow}{\charfusion[\mathbin]{\cup}{{\raisebox{.5ex} {\smathcalebox{.4}{\ensuremath{\leftarrow}}}}}}
\newcommand{\bigcuparrow}{\charfusion[\mathop]{\bigcup}{\leftarrow}}
\newcommand{\finiteunion}{\cuparrow}
\newcommand{\finitemap}{\ensuremath{\sqsubseteq}}
\newcommand{\dgraph}{dependency graph}
\newcommand{\domain}[1]{{\sc #1}}
\newcommand{\solver}[1]{{\sc #1}}
\providecommand{\problem}[1]{\domain{#1}}
\renewcommand{\v}{\ensuremath{\mathit{v}}}
\providecommand{\vs}[1]{\domain{#1}}
\renewcommand{\vs}{\ensuremath{\mathit{vs}}}
\newcommand{\VS}{\ensuremath{\mathit{VS}}}
\newcommand{\Aut}{\ensuremath{\mathit{Aut}}}
\newcommand{\Inst}[2]{\ensuremath{\mathit{#2 \rightarrow_{#1} #1}}}
\newcommand{\Image}{\ensuremath{\mathit{Im}}}
\newcommand{\Img}[2]{\protect{#1 \llparenthesis #2 \rrparenthesis}}
\newcommand{\SND}{\ensuremath{\mathit{\pi_2}}}
\newcommand{\FST}{\ensuremath{\mathit{\pi_1}}}
\newcommand{\tvsal}{{\pitchfork}}
\newcommand{\nauty}{CGIP}

\newcommand{\pwinter}{\ensuremath{\mathit{\bigcap_{pw}}}}

\newcommand{\dom}{\ensuremath{\mathit{\mathcal{D}}}}
\newcommand{\codom}{\ensuremath{\mathcal{R}}}

\newcommand{\map}{\ensuremath{\mathit{map}}}
\newcommand{\BIJEC}{\ensuremath{\mathit{bij}}}
\newcommand{\INJ}{\ensuremath{\mathit{inj}}}
\newcommand{\funion}{\ensuremath{\overset{\leftarrow}{\cup}}}

\newcommand{\ifnew}{\mbox{\upshape \textsf{if}}}
\newcommand{\thennew}{\mbox{\upshape \textsf{then}}}
\newcommand{\elsenew}{\mbox{\upshape \textsf{else}}}
\newcommand{\choice}{\ensuremath{\epsilon}}
\newcommand{\arbchoice}{\mbox{\upshape \textsf{arb}}}
\newcommand{\acycchoice}{\mbox{\upshape \textsf{ac}}}
\newcommand{\cycchoice}{\mbox{\upshape \textsf{cyc}}}
\newcommand{\filter}{\ensuremath{\mathit{FIL}}}
\newcommand{\probset}{\ensuremath{\boldsymbol \Pi}}
\newcommand{\probleq}{\ensuremath{\leq_\Pi}}
\newcommand{\CommVar}{\ensuremath{\bigcap_\v} }
\newcommand{\quotfun}{\ensuremath{ \mathcal{Q}}}

\newcommand{\apre}{\mbox{\upshape \textsf{pre}}}
\newcommand{\aeff}{\mbox{\upshape \textsf{eff}}}
\newcommand{\problist}{\ensuremath \probset}
\newcommand{\cat}{{\frown}}
\newcommand{\probproj}[2]{{#1}{\downharpoonright}^{#2}}
\newcommand{\preced}{\mathbin{\rotatebox[origin=c]{180}{\ensuremath{\rhd}}}}
\newcommand{\perm}{\ensuremath{\sigma}}
\newcommand{\invariant}[2]{\ensuremath{\mathit{inv({#1},{#2})}}}
\newcommand{\invstates}[1]{\ensuremath{\mathit{inv({#1})}}}
\newcommand{\probss}[1]{{\mathcal S}(#1)}
\newcommand{\parChildRel}[3]{\ensuremath{\negdep{#1}{#2}{#3}}}
\newcommand{\asessymbol}{\ensuremath{\mathbb{A}}}
\newcommand{\ases}[1]{{#1}^*}
\newcommand{\uniStates}{\ensuremath{\mathbb{U}}}
\newcommand{\recurrenceDiam}{\ensuremath{\mathit{rd}}}
\newcommand{\recurrenceAcycDiamfun}{\ensuremath{\mathit{{\mathfrak A}}}}
\newcommand{\recurrenceDiamfun}{\ensuremath{\mathit{\mathfrak R}}}
\newcommand{\traversalDiam}{\ensuremath{\mathit{td}}}
\newcommand{\traversalDiamfun}{\ensuremath{\mathit{\mathfrak T}}}
\newcommand{\isPrefix}[2]{\ensuremath{#1 \preceq #2}}
\providecommand{\path}{\ensuremath{\gamma}}
\newcommand{\aspath}{\ensuremath{\vec{\path}}}
\renewcommand{\path}{\ensuremath{\gamma}}
\newcommand{\n}{\textsf{\upshape n}}
\providecommand{\graph}{}
\providecommand{\cal}{}
\renewcommand{\cal}{}
\renewcommand{\graph}{{\cal G}}
\newcommand{\undirgraph}{{\cal G}}
\newcommand{\sset}{\ensuremath{\mbox{\upshape \textsf{ss}}}}
\renewcommand{\ss}{\ensuremath{\state s}}
\newcommand{\slist}{\ensuremath{\vec{\mbox{\upshape \textsf{ss}}}}}
\newcommand{\sll}{\ensuremath{\vec{\state}}}
\newcommand{\listset}{\mbox{\upshape \textsf{set}}}
\newcommand{\asset}{\ensuremath{\mathit{K}}}
\newcommand{\aslist}{\ensuremath{\mathit{\overset{\rightarrow}{\gamma}}}}
\newcommand{\head}{\mbox{\upshape \textsf{hd}}}
\renewcommand{\max}{\textsf{\upshape max}}
\newcommand{\argmax}{\textsf{\upshape argmax}}
\renewcommand{\min}{\textsf{\upshape min}}
\newcommand{\bool}{\mbox{\upshape \textsf{bool}}}
\newcommand{\last}{\mbox{\upshape \textsf{last}}}
\newcommand{\front}{\mbox{\upshape \textsf{front}}}
\newcommand{\rot}{\mbox{\upshape \textsf{rot}}}
\newcommand{\stuff}{\mbox{\upshape \textsf{intlv}}}
\newcommand{\tail}{\mbox{\upshape \textsf{tail}}}
\newcommand{\ngrtoas}{\ensuremath{\mathit{\as_{\graph_\mathbb{N}}}}}
\newcommand{\vsfun}{\mbox{\upshape \textsf{vs}}}
\newcommand{\inits}{\mbox{\upshape \textsf{init}}}
\newcommand{\satprecondas}{\mbox{\upshape \textsf{sat-pre}}}
\newcommand{\remcondlessact}{\mbox{\upshape \textsf{rem-cless}}}
\providecommand{\state}{}
\renewcommand{\state}{x}
\newcommand{\statea}{\ensuremath{x_1}}
\newcommand{\stateb}{\ensuremath{x_2}}
\newcommand{\statec}{\ensuremath{x_3}}
\newcommand{\fals}{\mbox{\upshape \textsf{F}}}
\newcommand{\indices}{\ensuremath{V}}
\newcommand{\edges}{\ensuremath{E}}
\newcommand{\vertices}{\ensuremath{V}}
\newcommand{\listtype}{\mbox{\upshape \textsf{list}}}
\newcommand{\settype}{\mbox{\upshape \textsf{set}}}
\newcommand{\acttype}{\mbox{\upshape \textsf{action}}}
\newcommand{\graphtype}{\mbox{\upshape \textsf{graph}}}
\newcommand{\projfun}[2]{\ensuremath{\Delta_{#1}^{#2}}}
\newcommand{\snapfun}[2]{\ensuremath{\Sigma_{#1}^{#2}}}
\newcommand{\RDfun}[1]{\ensuremath{{\mathcal R}_{#1}}}
\newcommand{\elldbound}[1]{\ensuremath{{\mathcal LS}_{#1}}}
\newcommand{\distinct}{\textsf{\upshape distinct}}
\newcommand{\ddistinct}{\mbox{\upshape \textsf{ddistinct}}}
\newcommand{\simple}{\mbox{\upshape \textsf{simple}}}

\newcommand{\reachable}[3]{\ensuremath{{#1}\rightsquigarrow{#3}}}

\newcommand{\Omit}[1]{}

\newcommand{\charles}[1]{\textcolor{red}{#1}}
\newcommand{\mohammad}[1]{Mohammad: \textcolor{green}{#1}}

\newcommand{\negreachable}[3]{\ensuremath{{#2}\not\rightsquigarrow{#3}}}
\newcommand{\wdiam}[2]{{#1}^{#2}}
\newcommand{\dsnapshot}[2]{\Delta_{#1}}
\newcommand{\ellsnapshot}[2]{{\mathcal L}_{#1}}

\newcommand{\snapshotsymbol}{|\kern-.7ex\raise.08ex\hbox{\scalebox{0.7}{$\bullet$}}}
\newcommand{\snapshot}[2]{\ensuremath{\mathrel{#1\snapshotsymbol_{#2}}}}
\newcommand{\vstype}{\texttt{\upshape VS}}
\newcommand{\vtype}{{\scriptsize \ensuremath{\dom(\delta)}}}
\newcommand{\Balgo}{{\mbox{\textsc{Hyb}}}}
\newcommand{\ssgraph}[1]{\graph_\ss}
\newcommand{\agree}{\textsf{\upshape agree}}
\newcommand{\ck}{\ensuremath{\texttt{ck}}}
\newcommand{\lk}{\ensuremath{\texttt{lk}}}
\newcommand{\gr}{\ensuremath{\texttt{gr}}}
\newcommand{\gk}{\ensuremath{\texttt{gk}}}
\newcommand{\CK}{\ensuremath{\texttt{CK}}}
\newcommand{\LK}{\ensuremath{\texttt{LK}}}
\newcommand{\GR}{\ensuremath{\texttt{GR}}}
\newcommand{\GK}{\ensuremath{\texttt{GK}}}
\newcommand{\safe}{\ensuremath{\texttt{s}}}

\newcommand{\derivname}{\ensuremath{\partial}}
\newcommand{\deriv}[3]{\ensuremath{\derivname(#1,#2,#3)}}
\newcommand{\derivabbrev}[3]{\ensuremath{{\partial(#1,#2)}}}
\newcommand{\subsetoracle}{\ensuremath{ \Omega}}
\newcommand{\Aalgo}{{\mbox{\textsc{Pur}}}}
\newcommand{\Sname}{\textsf{\upshape S}}
\newcommand{\Sbrace}[1]{\Sname\langle#1\rangle}
\newcommand{\SalgoName}{\Sname_{\textsf{\upshape max}}}
\newcommand{\Salgo}[1]{\SalgoName\langle#1\rangle}

\newcommand{\WLPname}{{\mbox{\textsc{wlp}}}}
\newcommand{\WLPbrace}[1]{\WLPname\langle#1\rangle}
\newcommand{\WLPalgoName}{\WLPname_{\textsf{\upshape max}}}
\newcommand{\WLP}[1]{\WLPalgoName\langle#1\rangle}

\newcommand{\Nname}{\ensuremath{\textsf{\upshape N}}}
\newcommand{\Nbrace}[1]{\Nname\langle#1\rangle}
\newcommand{\NalgoName}{\Nname{_{\textsf{\upshape sum}}}}
\newcommand{\Nalgobrace}[1]{\NalgoName\langle#1\rangle}

\newcommand{\acycNname}{\widehat{\textsf{\upshape N}}}
\newcommand{\acycNbrace}[1]{\acycNname\langle#1\rangle}
\newcommand{\acycNalgoName}{\acycNname{_{\textsf{\upshape sum}}}}
\newcommand{\acycNalgobrace}[1]{\acycNalgoName\langle#1\rangle}

\newcommand{\Mname}{\ensuremath{\textsf{\upshape M}}}
\newcommand{\Mbrace}[1]{\Mname\langle#1\rangle}
\newcommand{\MalgoName}{\Mname{_{\textsf{\upshape sum}}}}
\newcommand{\Malgobrace}[1]{\MalgoName\langle#1\rangle}
\newcommand{\cardinality}[1]{{\ensuremath{|#1|}}}
\newcommand{\length}[1]{\cardinality{#1}}
\newcommand{\basecasefun}{\ensuremath{b}}
\newcommand{\Basecasefun}{\ensuremath{\mathcal B}}

\newcommand{\vertexgen}{\ensuremath{u}}
\newcommand{\vertexa}{{\ensuremath{\vertexgen_1}}}
\newcommand{\vertexb}{{\ensuremath{\vertexgen_2}}}
\newcommand{\vertexc}{{\ensuremath{\vertexgen_3}}}
\newcommand{\vertexd}{{\ensuremath{\vertexgen_4}}}
\newcommand{\vertexsetgen}{\ensuremath{\mathit{us}}}
\newcommand{\vertexseta}{\vertexsetgen_1}
\newcommand{\vertexsetb}{\vertexsetgen_2}
\newcommand{\labelsymbol}{\ensuremath{l}}
\newcommand{\labelfun}{\ensuremath{\mathcal{L}}}
\newcommand{\DAG}{\ensuremath{A}}
\newcommand{\NalgoNameN}{{\ensuremath{\NalgoName_{\mathbb{N}}}}}
\newcommand{\NnameN}{\ensuremath{\Nname_\mathbb{N}}}
\newcommand{\replaceprojsinglename}{\raisebox{-0.3mm} {\scalebox{0.7}{\textpmhg{H}}}}
\newcommand{\replaceprojsingle}[3] {{ #2} \underset {#1} {\raisebox{-0.3mm} {\scalebox{0.7}{\textpmhg{H}}}}  #3}
\newcommand{\HOLreplaceprojsingle}[1]{\underset {#1} {\raisebox{-0.3mm} {\scalebox{0.7}{\textpmhg{H}}}}}

\newcommand{\lotus}{{\scalebox{0.6}{\includegraphics{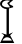}}}}
\newcommand{\invlotus}{\mathbin{\rotatebox[origin=c]{180}{$\lotus$}}}
\newcommand{\clique}{\ensuremath{K}}
\newcommand{\partition}{\ensuremath{\vs_{1..n}}}
\newcommand{\partitiontype}{\ensuremath{\vstype_{1..n}}}
\newcommand{\vtxpartition}{\ensuremath{P}}

\newcommand{\traversalDiamAlgo}{{\mbox{\textsc{TravDiam}}}}
\newcommand{\prefix}{\textsf{\upshape pfx}}
\newcommand{\powerset}{\mathbb{P}}
\newcommand{\postfix}{\textsf{\upshape sfx}}
\newcommand{\dfunproj}{\ensuremath{{\mathfrak D}}}
\newcommand{\dfunsnap}{\ensuremath{{\textgoth D}}}
\newcommand{\ellfunproj}{\ensuremath{\mathfrak L}}
\newcommand{\ellfunsnap}{\ensuremath{\textgoth L}}
\newcommand{\cycle}{\ensuremath{C}}
\newcommand{\petal}{\ensuremath{\eta}}
\renewcommand{\prod}{\ensuremath{{{{{\mathlarger{\mathlarger {{\mathlarger {\Pi}}}}}}}}}}
\newcommand{\sccset}{{\ensuremath{SCC}}}
\newcommand{\scc}{{\ensuremath{scc}}}
\newcommand{\negate}[1]{\overline{#1}}
\newcommand{\setofsets}{\ensuremath{S}}
\newcommand{\group}{\ensuremath{\cal \Gamma}}
\newcommand{\neededvars}{{\cal N}}
\newcommand{\sspace}{\mbox{\upshape \textsf{sspc}}}
\newcommand{\tip}{\ensuremath{t}}
\newcommand{\vara}{\ensuremath{\v_1}}
\newcommand{\varb}{\ensuremath{\v_2}}
\newcommand{\varc}{\ensuremath{\v_3}}
\newcommand{\vard}{\ensuremath{\v_4}}
\newcommand{\vare}{\ensuremath{\v_5}}
\newcommand{\varf}{\ensuremath{\v_6}}
\newcommand{\varg}{\ensuremath{\v_7}}
\newcommand{\varh}{\ensuremath{\v_8}}
\newcommand{\vari}{\ensuremath{\v_9}}
\newcommand{\acta}{\ensuremath{\act_1}}
\newcommand{\actb}{\ensuremath{\act_2}}
\newcommand{\actc}{\ensuremath{\act_3}}
\newcommand{\actd}{\ensuremath{\act_4}}
\newcommand{\acte}{\ensuremath{\act_5}}
\newcommand{\actf}{\ensuremath{\act_6}}
\newcommand{\actg}{\ensuremath{\act_7}}
\newcommand{\acth}{\ensuremath{\act_8}}
\newcommand{\acti}{\ensuremath{\act_9}}

\newcommand{\planningproblem}{\Uppi}

\tikzset{dots/.style args={#1per #2}{line cap=round,dash pattern=on 0 off #2/#1}}
\providecommand{\moham}[1]{\fbox{{\bf \@Mohammad: }#1}}
\newcommand{\TDbound}{{\mbox{\textsc{Arb}}}}
\newcommand{\expbound}{{\mbox{\textsc{Exp}}}}
\newcommand{\sasdom}{\expbound}
\newcommand{\cardfun}{\ensuremath{\mathbb{C}}}
\newcommand{\AGNa}{AGN1}
\newcommand{\AGNb}{AGN2}
\newcommand{\reset}{{\ensuremath{reset}}}
\newcommand{\cost}{{\ensuremath{\mathcal{C}}}}
\newcommand{\goal}{{\ensuremath{\mathcal{G}}}}
\newcommand{\init}{{\ensuremath{\mathcal{I}}}}
\newcommand{\completenessthreshold}{{\ensuremath{\mathcal{CT}}}}
\newcommand{\subsetDiam}{\mathscr{S}}
 \renewcommand{\childrensymbol}{\textsf{\upshape child}}
\renewcommand{\traversalDiamAlgo}{{\mbox{\textsc{TravD}}}}

\newcommand{\biere}{BiereCCZ99}
\section{Introduction}
\label{sec:intro}

Many techniques for solving problems defined on transition systems, like SAT-based planning~\cite{kautz:selman:92} and bounded model checking~\cite{BiereCCZ99}, benefit from knowledge of {\em upper bounds} on the lengths of solution transition sequences, aka \emph{completeness thresholds}.
If $N$ is such a bound, and if a solution exists, then that solution need not comprise more than $N$ transitions.

In AI planning, upper bounds on plan lengths have two main uses related to SAT-based planning.
Firstly, like for bounded model-checking, an upper bound on plan lengths can be used as a completeness threshold, i.e. to prove a planning problem has no solution.
Secondly, it can be used to improve the ability of a SAT-based planner to find a solution.
Typically, a SAT-based planner queries a SAT solver to search for plans of increasing lengths, aka horizons, going through many unsatisfiable formulae until the given horizon is longer than the shortest possible plan.
If a horizon longer than the shortest plan were initially provided, the planner can avoid many of the costly unsatisfiable queries~\cite{DBLP:journals/aicom/GereviniSV15}.
Using plan length upper bounds as horizons in this way 
was shown to increase the coverage of SAT-based planners~\cite{Rintanen:Gretton:2013,icaps2017,abdulaziz:2019}.

\citeauthor{BiereCCZ99} identified the {\em diameter} ($d$) and the \emph{recurrence diameter} ($\recurrenceDiam$), which are topological properties of the state space, as completeness thresholds for bounded model-checking of safety and liveness properties, respectively.
$d$ is the longest shortest path between any two states.
$\recurrenceDiam$ is the length of the longest simple path in the state space, i.e. the length of the longest path that does not traverse any state more than once.
Both, $d$ and $\recurrenceDiam$, are upper bounds on the shortest plan's length, i.e. they are completeness thresholds for SAT-based planning.
Also, $d$ is a lower bound on $\recurrenceDiam$ that can be exponentially smaller.

Computing $d$ or $\recurrenceDiam$ for succinctly represented transition systems, such as \emph{factored systems}, is hindered by the worst-case complexity of all existing methods, which is exponential or doubly-exponential in the size of the given system, respectively.
This complexity can be alleviated by \emph{compositionally} computing upper bounds on $d$ or $\recurrenceDiam$ instead of exactly computing them.
Compositional bounding methods compute an upper bound on a factored transition system's diameter by composing together values of topological properties of state spaces of abstract subsystems~\cite{baumgartner2002property,Rintanen:Gretton:2013,abdulaziz2015verified,icaps2017,abdulaziz:2019}, and they are currently the only practically viable method to compute bounds on plan lengths or the state space diameter.
Compositional approaches provide useful bounds on plan length or the diameter using potentially exponentially smaller computational effort compared to directly computing $d$ or $\recurrenceDiam$, since explicit representations of only abstract subsystems have to be constructed.

In this work we study the computation of $\recurrenceDiam$ for state spaces of classical planning problems, which are factored transition systems.
The longest simple path and its length are fundamental graph properties.
Thus, computing $\recurrenceDiam$ for state spaces of planning problems is inherently interesting, as it might reveal interesting properties of different planning problems.
However, our goal is to devise better compositional methods to compute upper bounds on plan lengths to aid SAT-based planning.
This raises an interesting question: why should we focus on computing $\recurrenceDiam$ instead of $d$?
This question is reasonable since, as stated earlier, $d$ can be computed in exponentially less time than $\recurrenceDiam$, and $\recurrenceDiam$ is an upper bound on $d$ that can be exponentially larger.
The reason is simple: it has been shown that $d$ cannot be bounded by diameters of \emph{projections}~\cite[Chapter 3, Theorem 1]{abdulaziz2017formally}.
Since projections are cornerstone abstractions for compositional bounding, a method to compute $d$ cannot be leveraged for compositional bounding.
On the other hand, previous authors showed that, theoretically, recurrence diameters of projections can be composed to bound the concrete system's diameter~\cite{baumgartner2002property,icaps2017}.
Here we explore the potential of this theoretical possibility: we study methods to compute $\recurrenceDiam$ and the use of those methods to improve existing compositional bounding algorithms.

Our first contribution concerns the relationship between $\recurrenceDiam$ and the \emph{traversal diameter} ($\traversalDiam$), which is another state space topological property.
The best existing compositional bounding method is due to \citeauthor{abdulaziz:2019}~\citeyear{abdulaziz:2019}, and it computes traversal diameters of abstractions and composes them into an upper bound on $d$ and on plan-length.
We show that $\traversalDiam$ is an upper bound on $\recurrenceDiam$, and that $\recurrenceDiam$ can be exponentially smaller than $\traversalDiam$.
This gives an opportunity for substantial improvements in the bounds computed by compositonal bounding methods, if recurrence diameters of abstractions are used instead of their traversal diameters.
However, the practical realisation of this improvement is contingent on whether there is an efficient method to compute $\recurrenceDiam$.

Our second, and main, contribution is that we investigate practically useful methods to compute recurrence diameters of factored systems that come from planning problems.
We implement two methods to compute recurrence diameters based on the work of \citeauthor{BiereCCZ99}~\citeyear{BiereCCZ99}.
Unlike \citeauthor{BiereCCZ99}, who devised their method and tested it on model-checking problems, we test these methods on planning benchmarks and show that it is impractical for most planning problems.
We show, however, that computing the recurrence diameter within the compositional bounding method by \citeauthor{icaps2017}~\citeyear{icaps2017} leads to bounds that are, as predicted theoretically, much tighter when $\recurrenceDiam$ is used instead of $\traversalDiam$.

However, a challenge is that our method to compute $\recurrenceDiam$ and that of \citeauthor{BiereCCZ99} have wort-case running times that are doubly-exponential in the size of the given factored system.
This is much worse than the worst-case complexity of computing $\traversalDiam$, which is singly-exponential in the size of the factored system.
This stems from the complexity of the problem of computing $\recurrenceDiam$ for succinct digraphs, which is NEXP-hard~\cite{pardalos2004note,papadimitriou1986note}.
Our third contribution is that we investigate techniques to alleviate the impact of this prohibitive worst-case running time on the overall compositional bounding algorithm by combining the computation of $\recurrenceDiam$ and $\traversalDiam$.

Lastly, we experimentally show that the improved bounds lead to an improved problem coverage for state-of-the-art SAT-based planner {\sc Mp}~\cite{rintanen:12}, when the bounds are used as horizons for it.

\section{Background and Notation}
\label{sec:defs}

We consider {\em factored transition systems} which are characterised in terms of a set of {\em actions}.
From actions we can define a set of {\em valid states}, and then approach bounds by considering properties of {\em executions} of actions on valid states.
Whereas conventional expositions in the planning and model-checking literature would also define initial conditions and goal/safety criteria, here we omit those features from discussion since the 
state-space topological properties we consider are independent of those features.

\begin{mydef}[States and Actions]
\label{def:stateAction}
A maplet, $\v \mapsto b$, maps a variable $\v$---i.e. a state-characterising proposition---to a Boolean $b$.
A state, $\state$, is a finite set of maplets.
We write $\dom(\state)$ to denote $\{\v \mid (\v \mapsto b) \in \state\}$, the domain of $\state$.
For states $\state_1$ and $\state_2$, the union, $\state_1 \uplus \state_2$, is defined as $\{\v \mapsto b \mid $ $\v \in \dom(\state_1) \cup \dom(\state_2) \wedge \ifnew\; \v \in \dom(\state_1) \;\thennew\; b = \state_1(\v) \;\elsenew\; b = \state_2(\v)\}$.
Note that the state $\state_1$ takes precedence.
An action is a pair of states, $(p,e)$, where $p$ represents the \emph{preconditions} and $e$ represents the \emph{effects}.
For action $\act=(p,e)$, the domain of that action is $\dom(\act)\equiv\dom(p) \cup \dom(e)$.
\end{mydef}
 
\begin{mydef}[Execution]
\label{def:exec}
When an action \act\ $(=(p,e))$ is executed at state $\state$, it produces a successor state $\exec{\state}{\act}$, formally defined as
$\exec{\state}{\act} = \ifnew\; p \nsubseteq \state\; \thennew\; \state \;\elsenew\; e \uplus \state$.
We lift execution to lists of actions $\as$, so $\exec{\state}{\as}$ denotes the state resulting from successively applying each action from $\as$ in turn, starting at $\state$.
\end{mydef} We give examples of states and actions using sets of literals, where we denote the maplet $a\mapsto \top$ with the literal $a$ and $a\mapsto \bot$ with the literal $\overline{a}$.
For example, $(\{a,\overline{b}\}, \{c\})$ is an action that if executed in a state where $a$ is true and $b$ is false, it sets $c$ to true.
$\dom((\{a,\overline{b}\}, \{c\})) = \{a,b,c\}$.
We also give examples of sequences, which we denote by the square brackets, e.g. $[a,b,c]$.

\begin{mydef}[Factored Transition System]
\label{def:actoredTransitionSystem}
A set of actions $\delta$ constitutes a factored transition system.
$\dom(\delta)$ denotes the domain of $\delta$, which is the union of the domains of all the actions in $\delta$.
Let $\listset(\as)$ be the set of elements in $\as$.
The set of valid action sequences, $\ases{\delta}$, is $\{\as \mid\; \listset(\as) \subseteq \delta\}$.
The set of valid states, $\uniStates(\delta)$,  is $\{\state \mid \dom(\state) = \dom (\delta)\}$.
$\graph(\delta)$ denotes the set of pairs 
$\{(\state, \exec{\state}{\act})\mid \state \in \uniStates(\delta), \act \in \delta\}$, which is all non self-looping transitions in the state space of $\delta$.
\end{mydef}
 {\makeatletter
\def\old@comma{,}
\catcode`\,=13
\def,{\ifmmode\old@comma\discretionary{}{}{}\else\old@comma\fi}
\makeatother
\makeatletter
\def\old@dot{.}
\catcode`\.=13
\def.{\ifmmode\old@dot\discretionary{}{}{}\else\old@dot\fi}
\makeatother
 \begin{figure}[t]
\begin{minipage}[b]{0.15\textwidth}
\centering
\begin{tikzpicture}[scale=0.8]
\node (s0) at (4,1) [varnodenoperi] {\footnotesize $\overline{\vara\varb}$ } ;
\node (s2) at (3,0)   [varnodenoperi] {\footnotesize $\overline{\vara}\varb$ } ;
\node (s4) at (5,0)   [varnodenoperi] {\footnotesize $\vara\overline{\varb}$ } ;
\node (s6) at (4,-1)[varnodenoperi] {\footnotesize $\vara\varb$ } ;
\draw [->] (s0) -- (s2) ;
\draw [->] (s2) -- (s0) ;
\draw [->] (s0) -- (s4) ;
\draw [->] (s4) -- (s0) ;
\draw [->] (s0) -- (s6) ;
\draw [->] (s6) -- (s0) ;
\draw [->] (s6) -- (s2) ;
\draw [->] (s2) -- (s6) ;
\draw [->] (s4) -- (s2) ;
\draw [->] (s2) -- (s4) ;
\draw [->] (s4) -- (s6) ;
\draw [->] (s6) -- (s4) ;
\end{tikzpicture}
\caption[fig:stateDiag]{\label{fig:stateDiag}}
\end{minipage}
\begin{minipage}[b]{0.15\textwidth}
\centering
\begin{tikzpicture}[scale=0.8]
\node (s0) at (4,1) [varnodenoperi] {\footnotesize $\overline{\vara\varb}$ } ;
\node (s1) at (3,0)   [varnodenoperi] {\footnotesize $\overline{\vara}\varb$ } ;
\node (s2) at (5,0)   [varnodenoperi] {\footnotesize $\vara\overline{\varb}$ } ;
\node (s3) at (4,-1)[varnodenoperi] {\footnotesize $\vara\varb$ } ;
\draw [->] (s0) -- (s1) ;
\draw [->] (s0) -- (s2) ;
\draw [->] (s0) -- (s3) ;
\end{tikzpicture}
\caption[fig:stateDiag2]{\label{fig:stateDiag2}}
\end{minipage}
\begin{minipage}[b]{0.16\textwidth}
\centering
\begin{tikzpicture}[scale=0.8]
\node (s0) at (4,1) [varnodenoperi] {\footnotesize $\overline{\vara\varb}$ } ;
\node (s1) at (3,0)   [varnodenoperi] {\footnotesize $\overline{\vara}\varb$ } ;
\node (s2) at (5,0)   [varnodenoperi] {\footnotesize $\vara\overline{\varb}$ } ;
\node (s3) at (4,-1)[varnodenoperi] {\footnotesize $\vara\varb$ } ;
\draw [<->] (s0) -- (s1) ;
\draw [<->] (s0) -- (s2) ;
\draw [<->] (s0) -- (s3) ;
\end{tikzpicture}
\caption[fig:stateDiag2]{\label{fig:stateDiag3}}
\end{minipage}
\caption{The state spaces of the systems from Examples~\ref{eg:factored},~\ref{eg:td},~and~\ref{eg:tdexpltrd}.
}
\end{figure}

\begin{myeg}
\label{eg:factored}
Consider the factored system $\delta = \{\acta =(\emptyset, \{\vara, \varb\}),
        \actb =(\emptyset, \{\overline{\vara}, \varb\}),
        \actc =(\emptyset, \{\vara, \overline{\varb}\}),
        \actd =(\emptyset, \{\overline{\vara}, \overline{\varb}\})\}$.
Figure~\ref{fig:stateDiag} shows $\graph(\delta)$, i.e.\ the state space of $\delta$, where different states defined on the variables $\dom(\delta) = \{\vara,\varb\}$ are shown.
Since every state can be reached via one action from every other state, the state space is a clique.
\end{myeg}
 }

For a system $\delta$, a bound on the length of action sequences is $\expbound(\delta) = 2^\cardinality{\dom(\delta)} - 1$ (i.e.\ one less than the number of valid states), where $\cardinality{\bullet}$ denotes the cardinality of a set or the length of a list.
Other bounds employed by previous approaches are topological properties of the state space.
One such topological property is the diameter, suggested by \citeauthor{BiereCCZ99}~\citeyear{BiereCCZ99}, which is the length of the longest shortest path between any two states in the state space of a system.
\begin{mydef}[Diameter]
\label{def:diam}
The diameter, written $d(\delta)$, is the length of the longest shortest action sequence, formally
\[d(\delta) = \underset{\state \in \uniStates(\delta), \as \in \ases{\delta}}{\max} \;\;\; \underset{\exec{\state}{\as} = \exec{\state}{\as'}, \as' \in \ases{\delta} }{\min}\; \cardinality{\as'}\]
\end{mydef}
 Note that if there is a valid action sequence between any two valid states of $\delta$, then there is a valid action sequence between them which is not longer than $d(\delta)$.
Thus it is a completeness threshold for bounded model-checking and SAT-based planning.
Another topological property that is an upper bound on plan lengths is the recurrence diameter, which is the length of the longest simple path in the state space of a transition system.
It was proposed by \citeauthor{BiereCCZ99}~\citeyear{BiereCCZ99}.
\begin{mydef}[Recurrence Diameter]
\label{def:diamrd}
Let $\distinct(\state,\as)$ denote that all states traversed by executing $\as$ at $\state$ are distinct states.
The recurrence diameter is the length of the longest simple path in the state space, formally
\[
\recurrenceDiam(\delta) = \underset{\state \in \uniStates(\delta),\as \in \ases{\delta}, \distinct(\state,\as)}{\max}\;\; \cardinality{\as}
\]
\end{mydef}
 \begin{myeg}
\label{eg:dexpltrd}
For the system $\delta$ from Example~\ref{eg:factored}, $d(\delta) = 1$, since every state can be reached with one action from every other state.
Nonetheless, $\recurrenceDiam(\delta) = 3$ as there are many paths with 3 actions in the state space that traverse distinct states, e.g. executing the action sequence $[\acta,\actb,\actc]$ at the state $\{\overline{\v_1},\overline{\v_2}\}$ traverses the distinct states $[\{\overline{\v_1},\overline{\v_2}\}, \{\v_1,\v_2\}, \{\overline{\v_1},\v_2\}, \{\v_1,\overline{\v_2}\}]$.
\end{myeg}
Note that in general $\recurrenceDiam$ is an upper bound on $d$, and that it can be exponentially larger than $d$.

\begin{mythm}[\citeauthor{BiereCCZ99}~\citeyear{BiereCCZ99}]
\label{thm:rdgediam}
For any system $\delta$, we have that $d(\delta) \leq \recurrenceDiam(\delta)$.
Also, there are infinitely many systems for which the recurrence diameter is exponentially (in the number of state variables) larger than the diameter.
\end{mythm}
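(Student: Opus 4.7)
I would split the statement into the inequality $d(\delta)\leq\recurrenceDiam(\delta)$ and the exponential separation, and treat them separately.

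For the inequality, the key observation is that any \emph{shortest} action sequence between two valid states must traverse distinct states. Fix $x\in\uniStates(\delta)$ and $\as\in\ases\delta$ witnessing the outer maximum in Definition~\ref{def:diam}, and let $\as'$ be a sequence achieving the inner minimum, so $\exec{x}{\as'}=\exec{x}{\as}$ and $|\as'|$ equals the value of that minimum. I would argue by contradiction: if $\as'$ visited some state twice, say at positions $i<j$ of the trace starting at $x$, then the action subsequence obtained by dropping entries at positions $i+1,\ldots,j$ would still be valid, still end at $\exec{x}{\as}$, and be strictly shorter, contradicting the minimality built into $\as'$. Hence $\distinct(x,\as')$ holds, so by Definition~\ref{def:diamrd} we get $|\as'|\leq\recurrenceDiam(\delta)$. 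Taking the outer max over $(x,\as)$ yields $d(\delta)\leq\recurrenceDiam(\delta)$.

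For the exponential gap, I would exhibit an infinite family of systems generalising Example~\ref{eg:factored}. Let $\delta_n$ have variables $\{\vara,\ldots,\v_n\}$ and, for every valid state $s\in\uniStates(\delta_n)$, include the action $(\emptyset,s)$; note that this generalises the four actions in Example~\ref{eg:factored}. Since each such action has empty precondition and its effect completely determines the successor state, any state can be reached from any other state in exactly one step, so $\graph(\delta_n)$ is the complete digraph on $2^n$ vertices (minus self-loops) and $d(\delta_n)\leq 1$. On the other hand, because this graph is complete, any enumeration of all $2^n$ valid states induces a simple path of length $2^n-1$, realisable by the corresponding action sequence, so $\recurrenceDiam(\delta_n)\geq 2^n-1$. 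Since $|\dom(\delta_n)|=n$, we obtain the desired exponential separation for infinitely many $n$.

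The only subtle point is the first part: one must be careful that the definition of $d$ in this paper is framed in terms of $(x,\as)$ and the minimum length of an alternative $\as'$ reaching the same successor, rather than over pairs of endpoint states, so the "shortening" argument must target $\as'$ (not $\as$) and preserve its endpoint $\exec{x}{\as}$. Everything else is bookkeeping against Definitions~\ref{def:diam} and~\ref{def:diamrd}.
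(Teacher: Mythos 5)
Your argument is correct, but note that the paper does not actually prove Theorem~\ref{thm:rdgediam}: it is imported verbatim from \citeauthor{BiereCCZ99}, so there is no in-paper proof to match against. Judged on its own terms, your proof is sound. The first half correctly targets the minimiser $\as'$ of the inner $\min$ in Definition~\ref{def:diam} and uses the cut-out-the-loop argument (valid here because $\ases{\delta}$ is closed under taking subsequences, so the shortened sequence is still a legal candidate with the same endpoint), concluding $\distinct(\state,\as')$ and hence $\cardinality{\as'}\leq\recurrenceDiam(\delta)$ via Definition~\ref{def:diamrd}. The second half is exactly the $n$-variable generalisation of the clique system of Examples~\ref{eg:factored} and~\ref{eg:dexpltrd} (which are your $\delta_2$), giving $d(\delta_n)=1$ versus $\recurrenceDiam(\delta_n)=2^n-1$; this is the same ``explicit family plus counting'' strategy the paper itself uses when it proves the analogous separation between $\recurrenceDiam$ and $\traversalDiam$ in Theorem~\ref{lem:rdexplttd}. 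One cosmetic remark: the paper's $\graph(\delta)$ as literally defined contains the pairs $(\state,\exec{\state}{\act})$ including the case $\exec{\state}{\act}=\state$, so your description of $\graph(\delta_n)$ as the complete digraph ``minus self-loops'' is slightly off relative to the formula, but this has no bearing on either bound since simple paths never use self-loops and reachability in one step is all that matters for $d$.
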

Like $d$, $\recurrenceDiam$ is a completeness threshold for SAT-based planning and for safety bounded model-checking but, unlike $d$, $\recurrenceDiam$ is also a completeness threshold for bounded model-checking of liveness properties, which was the original reason for its inception~\cite{BiereCCZ99}.

Algorithms have been developed to calculate both properties for digraphs, and those algorithms can be directly applied to state spaces of explicitly represented (e.g. tabular) transition systems.
Exact algorithms to compute $d$ have worse than quadratic runtimes in the number of states~\cite{fredman1976new,alon1997exponent,chan2010more,yuster2010computing}, and approximation algorithms have super-linear runtimes~\cite{aingworth1999fast,roditty2013fast,chechik2014better,abboud2016approximation}. 
The situation is worse for $\recurrenceDiam$, whose computation is NP-hard~\cite{pardalos2004note} for explicitly represented systems.
The impracticality of computing $d$ and $\recurrenceDiam$ is exacerbated in settings where transition systems are described using factored representations, like in planning and model-checking~\cite{fikes1971strips,mcmillan1993symbolic}.
In particular, the worst-case running times are exponentially worse because, in the worst case, all known methods construct an explicit representation of the state space to compute $d$ or $\recurrenceDiam$ of the state space of a succinctly represented system.
This follows the general pattern of complexity exponentiation of graph problems when graphs are succinctly represented, where, for succinct digraphs, the complexity of computing $d$ is $\Pi_2^P$-hard~\cite{hemaspaandra2010complexity} and the complexity of computing $\recurrenceDiam$ is NEXP-hard~\cite{papadimitriou1986note}, instead of being in P and NP-hard, respectively, in the explicit case.

\subsection{Compositional Bounding of the Diameter}

The prohibitive complexity of computing $d$ or $\recurrenceDiam$ suggests they can only be feasibly computed for very small factored systems, systems that are much smaller than those that arise in typical classical planning benchmarks.
However, another possibility is to utilise the computation of $d$ or $\recurrenceDiam$ within compositional plan length upper bounding techniques.
Existing techniques compute an upper bound on $d$, for a given system, by computing topological properties of abstractions of the given system and then composing the abstractions' topological properties. 
Those abstractions are usually much smaller than the given concrete system, where their state spaces can be exponentially smaller than the given system's state space.
Thus, computing topological properties of abstractions might be feasible.

Currently, the compositional bounding method by~\citeauthor{icaps2017}~\citeyear{icaps2017} is the most successful in decomposing a given system into the smallest abstractions.
It decomposes a given factored system using two kinds of abstraction: \emph{projection} and \emph{snapshotting}.
Projection~\cite{knoblock:94,williams:nayak:97} produces an over-approximation of the given system and it was used for bounding by many previous authors.
Snapshotting produces an under-approximation of the given system and it was introduced by~\citeauthor{icaps2017}~\citeyear{icaps2017}.
The compositional method devised by~\citeauthor{icaps2017}~\citeyear{icaps2017} recursively interleaves the application of projection and snapshotting until the system is decomposed into subsystems that can no longer be decomposed, to which we refer here as \emph{base case systems}.
\newcommand{\Balgobrace}[1]{\ensuremath{\Balgo\langle{#1}\rangle}}
After the system is decomposed into base case systems, a topological property, which we call the \emph{base case function}, of the state space of each of the base case systems is computed.
Then the values of the base case function applied to the different base case systems are composed to bound the diameter of the concrete system.

Most authors used the base case function $\expbound$, which is one less than the number of valid states for the given base case system~\cite{Rintanen:Gretton:2013,abdulaziz2015verified,icaps2017}.
A notable exception is \citeauthor{abdulaziz:2019}~\citeyear{abdulaziz:2019}, who used the \emph{traversal diameter}, which is a topological property of the state space, as a base case function.
The traversal diameter is one less than the largest number of states that could be traversed by any path.
\begin{mydef}[Traversal Diameter]
\label{def:td}
Let $\sset(\state,\as)$ be the set of states traversed by executing $\as$ at $\state$.
The traversal diameter is
\[\traversalDiam(\delta) = \underset{\state \in \uniStates(\delta), \as \in \ases{\delta}}{\max}\;\; \cardinality{\sset(\state,\as)} - 1.\]
\end{mydef}
 \begin{myeg}
\label{eg:td}
Consider the factored system $\delta = \{\acta =(\{\overline{\vara}, \overline{\varb}\}, \{\vara, \varb\}),
        \actb =(\{\overline{\vara}, \overline{\varb}\}, \{\overline{\vara}, \varb\}),
        \actc =(\{\overline{\vara}, \overline{\varb}\}, \{\vara, \overline{\varb}\})\}$.
The digraph in Figure~\ref{fig:stateDiag2} shows the state space of $\delta$.
For $\delta$, $\expbound(\delta) = 3$, while $\traversalDiam(\delta) = 1$.
\end{myeg}
 \citeauthor{abdulaziz:2019}~\citeyear{abdulaziz:2019} showed that $\traversalDiam$ is an upper bound on $\recurrenceDiam$ and a lower bound on $\expbound$.
He also showed that $\traversalDiam$ can be exponentially smaller than $\expbound$, as shown in the above example.
This is why, when \citeauthor{abdulaziz:2019}~\citeyear{abdulaziz:2019} used $\traversalDiam$ as a base case function, his method computed substantially tighter bounds than previous methods, which all used $\expbound$.
 \section{The Recurrence Diameter Versus the Traversal Diameter}
We now study the relationship between the recurrence diameter and the traversal diameter.
The core insight we make here is that $\recurrenceDiam$ can be exponentially smaller than $\traversalDiam$.
\begin{mythm}
\label{lem:rdexplttd}
There are infinitely many factored systems whose recurrence diameters are exponentially smaller (in the number of state variables) than their traversal diameters.
\end{mythm}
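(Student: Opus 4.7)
The plan is to exhibit, for every $n \ge 2$, a factored system $\delta_n$ over $n$ Boolean variables $\vara, \ldots, \v_n$ whose state space is a \emph{star}: a distinguished centre $c = \{\vara \mapsto \bot, \ldots, \v_n \mapsto \bot\}$ is joined by bidirectional edges to each of the $2^n - 1$ other valid states, with no other edges present. In such a graph the longest simple path has length $2$ (leaf--centre--leaf), while starting from the centre one can traverse every valid state by repeatedly ``bouncing'' out to a leaf and back. This yields $\recurrenceDiam(\delta_n) = 2$ but $\traversalDiam(\delta_n) = 2^n - 1$, an exponential gap in the number of state variables, with a fresh witness for each $n$.

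The construction is as follows. For each $s \in \uniStates(\delta_n) \setminus \{c\}$, place in $\delta_n$ an outbound action $\act_s^{+} = (c, s)$ and an inbound action $\act_s^{-} = (s, c)$. A routine check against Definition~\ref{def:exec} shows that $\act_s^{+}$ is enabled exactly at $c$ (where it writes $s$), $\act_s^{-}$ is enabled exactly at $s$ (where it writes $c$), and every other application is a self-loop (the precondition pins down a unique state because $\dom(c) = \dom(s) = \dom(\delta_n)$). Since Definition~\ref{def:actoredTransitionSystem} strips self-loops from $\graph(\delta_n)$, the state space is exactly the claimed star.

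For the lower bound on $\traversalDiam(\delta_n)$, enumerate the non-centre states as $s_1, \ldots, s_{2^n - 1}$ and execute from $c$ the alternating sequence $\as = [\act_{s_1}^{+}, \act_{s_1}^{-}, \act_{s_2}^{+}, \act_{s_2}^{-}, \ldots, \act_{s_{2^n - 1}}^{+}, \act_{s_{2^n - 1}}^{-}]$. The traversed states are $c, s_1, c, s_2, \ldots, c, s_{2^n - 1}, c$, so $\sset(c, \as) = \uniStates(\delta_n)$ and $\traversalDiam(\delta_n) \ge 2^n - 1$; the matching upper bound is immediate from $\cardinality{\uniStates(\delta_n)} = 2^n$. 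For $\recurrenceDiam(\delta_n)$, the sequence $[\act_{s_1}^{-}, \act_{s_2}^{+}]$ started at $s_1$ visits the three distinct states $s_1, c, s_2$, giving $\recurrenceDiam(\delta_n) \ge 2$. For optimality, any action sequence of length $\ge 3$ corresponds to a walk of length $\ge 3$ in the star; such a walk must alternate between the centre and the leaves, which forces the centre to be revisited and so violates $\distinct$.

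I do not expect any substantive obstacle: once the star topology is identified, both bounds are immediate. The one point to verify carefully is that applications of actions whose preconditions are not met produce self-loops rather than genuine transitions, which follows directly from the ``$\ifnew\; p \nsubseteq \state\; \thennew\; \state$'' clause of Definition~\ref{def:exec} together with the exclusion of self-loops in Definition~\ref{def:actoredTransitionSystem}.
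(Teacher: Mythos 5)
Your proposal is correct and takes essentially the same route as the paper, which also builds a star-shaped state space (its $\lotus_n$, with centre $\state^{n+1}_0$ and bidirectional actions to each leaf) and argues $\recurrenceDiam=2$ versus a traversal diameter equal to the number of leaves. The only difference is parametrisation: you take the full star on all $2^n$ valid states over $n$ variables, whereas the paper uses $n$ leaves over $\lceil\log (n+1)\rceil$ variables, which is the same idea.
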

 \renewcommand{\probss}{\mathcal S}
\begin{proof}
Let, for a natural number $n$, $\dom_n$ denote the indexed set of state variable $\{\v_1,\v_2,\ldots,\v_{\lceil\log n\rceil}\}$.
Let $\state^n_i$ denote the state defined by assigning all the state variables $\dom_n$, s.t.\ their assignments binary encode the natural number $i$, where the index of each variable from $\dom_n$ represents its endianess.
Note: $\state^n_i$ is well defined for $0\leq i\leq 2^{\lceil \log n \rceil} - 1$.

Now, for an arbitrary number $n\in\mathbb{N}$, let $\lotus_n$ denote the factored system (i.e. set of actions) $\{(\state^{n+1}_0, \state^{n+1}_i)\mid 1 \leq i \leq n\}\cup\{(\state^{n+1}_i, \state^{n+1}_0)\mid 1 \leq i \leq n\}$.
The recurrence diameter of the system $\lotus_n$ is $2$, regardless of $n$, since any action sequence that traverses more than $3$ states will traverse $\state^{n+1}_0$ more than once.
Now, let $\probss$ denote the set of states in the largest connected component in the state space of $\lotus_n$, which has $n+1$ states in it.
Since for any two states $\state_i^{n+1},\state_j^{n+1}\in\probss$, there is an action sequence $\as\in\ases{\lotus_n}$ s.t. $\exec{\state_i^{n+1}}{\as} = \state_j^{n+1}$, and since $\cardinality{\probss}=n+1$, then the traversal diameter of $\lotus_n$ is $n$.
Accordingly, and since $2^{\cardinality{\dom(\lotus_n)} - 2} = 2^{\cardinality{\dom_{n+1}} - 2} = 2^{\lceil \log n \rceil - 1} \leq n$, we have that $2^{\cardinality{\dom(\lotus_n)} - 2} \leq \traversalDiam(\lotus_n)$.
The theorem follows from this and since $\recurrenceDiam(\lotus_n) = 2$.
\end{proof}

\begin{myeg}
\label{eg:tdexpltrd}
The state space of $\lotus_3$ is depicted in Figure~\ref{fig:stateDiag3}.
$\recurrenceDiam(\lotus_3) = 2$, and $\traversalDiam(\lotus_3) = 3$.
\end{myeg}
 The fact that $\recurrenceDiam$ can be exponentially smaller than $\traversalDiam$ gives rise to the possibility of substantial improvements to the bounds computed if we use $\recurrenceDiam$ as a base case function for compositional bounding, instead of $\traversalDiam$.

\section{Using the Recurrence Diameter for Compositional Bounding}

Here, we investigate using $\recurrenceDiam$ as a base case function for the compositional algorithm by~\citeauthor{icaps2017}~\citeyear{icaps2017}.
To do that, we firstly devise a method to compute  $\recurrenceDiam$.
For explicitly represented digraphs, the computational complexity of finding the length of the longest path is NP-hard~\cite{pardalos2004note}.
Thus, there is not a known method to compute it with a wort-case running time smaller than a time exponential in the size of the given digraph.
\citeauthor{BiereCCZ99}~\citeyear{BiereCCZ99} suggested the only method to compute $\recurrenceDiam$ of which we are aware.
They encode the question of whether a given number $k$ is $\recurrenceDiam$ of a given transition system as a SAT formula.
$\recurrenceDiam$ is found by querying a SAT-solver for different values of $k$, until the SAT-solver answers positively for one $k$.
The method terminates since $\recurrenceDiam$ cannot be larger than one less the number of states in the given transition system.
The size of their encoding grows linearly in $k^2$.
The encoding of Biere et al. is based on the following theorem, which we restate in our notation.
\newcommand{\encodinga}{\ensuremath{\phi_1}}

{
\renewcommand{\underset}[2]{#2#1.\; }
\begin{mythm}[\citeauthor{BiereCCZ99}~\citeyear{BiereCCZ99}]
For a factored system $\delta$ and a natural number $k$, we have that $\encodinga(\delta,k)$ is true iff $\recurrenceDiam(\delta) < k$, where $\encodinga(\delta,k)$ denotes the conjunction of
\begin{enumerate}
  \item $\underset{(\state,\state')\in\graph(\delta)}{\forall}\graph(\state,\state')$, 
  \item $\underset{\state,\state'\in\uniStates(\delta)}{\forall}$ if $(\state,\state')\not\in\graph(\delta)$, then $\neg\graph(\state,\state')$, and
  \item if $\forall\statea\stateb\dots\state_{k+1}. ({\underset{1\leq i\leq k}{\forall}} \graph(\state_i,\state_{i+1}))$ then $({\underset{1\leq i < j \leq k+1}{\exists}} \state_i=\state_j)$.
\end{enumerate}
\end{mythm}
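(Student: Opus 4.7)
The plan is to prove the biconditional by first observing that clauses (1) and (2) of $\encodinga(\delta,k)$ are jointly equivalent to the statement that the binary predicate $\graph(\cdot,\cdot)$ is interpreted as exactly the relation $\graph(\delta)\subseteq\uniStates(\delta)\times\uniStates(\delta)$: clause (1) forces every real transition to be included, and clause (2) forbids any spurious pair from being included. Once this identification is made, the whole encoding reduces to the single content of clause (3), which states that every length-$k$ walk $\statea,\stateb,\ldots,\state_{k+1}$ in $\graph(\delta)$ must contain a repeated vertex. So it suffices to show that this ``no simple length-$k$ walk'' property is equivalent to $\recurrenceDiam(\delta)<k$.

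For the forward direction, I would assume $\encodinga(\delta,k)$ holds and argue contrapositively: suppose $\recurrenceDiam(\delta)\geq k$. By Definition~\ref{def:diamrd}, there is some $\state\in\uniStates(\delta)$ and $\as\in\ases{\delta}$ with $|\as|\geq k$ and $\distinct(\state,\as)$. Truncate $\as$ to its first $k$ actions $\as'$; executing $\as'$ at $\state$ traverses $k+1$ pairwise-distinct states $\statea,\ldots,\state_{k+1}$ where $\statea=\state$ and each consecutive pair $(\state_i,\state_{i+1})$ lies in $\graph(\delta)$ (distinctness of consecutive states, together with $\state_{i+1}=\exec{\state_i}{\act_i}$ for some $\act_i\in\delta$, rules out the self-loop case that $\graph(\delta)$ excludes). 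These $k+1$ states witness a violation of clause (3), contradicting the assumption.

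For the reverse direction, assume $\recurrenceDiam(\delta)<k$ and take arbitrary states $\statea,\ldots,\state_{k+1}$ with $(\state_i,\state_{i+1})\in\graph(\delta)$ for each $1\leq i\leq k$. Each such edge arises from some action $\act_i\in\delta$ with $\state_{i+1}=\exec{\state_i}{\act_i}$, so the list $\as=[\acta,\ldots,\act_k]$ is a valid action sequence of length $k$ whose execution at $\statea$ traverses exactly $\statea,\ldots,\state_{k+1}$. If these states were all pairwise distinct, $\distinct(\statea,\as)$ would hold with $|\as|=k>\recurrenceDiam(\delta)$, contradicting maximality in Definition~\ref{def:diamrd}. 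Hence some $\state_i=\state_j$ with $i<j$, verifying clause (3).

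The main obstacle, and the only place where care is needed, is the mismatch between $\graph(\delta)$ (which by Definition~\ref{def:actoredTransitionSystem} omits self-loops) and raw action executions (which may produce $\exec{\state}{\act}=\state$ when preconditions fail or effects are vacuous). This is harmless because simple walks cannot contain self-loops anyway: in the forward direction, distinctness of $\statea,\ldots,\state_{k+1}$ rules out any $\state_i=\state_{i+1}$, and in the reverse direction the hypothesis $(\state_i,\state_{i+1})\in\graph(\delta)$ already excludes it. Everything else is a straightforward unfolding of the definitions of $\ases{\delta}$, $\uniStates(\delta)$, $\exec{\cdot}{\cdot}$, and $\distinct$.
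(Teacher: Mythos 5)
Your proof is correct, and since the paper states this theorem without proof (it is cited from Biere et al.), the right comparison is to the paper's proof of Theorem~\ref{thm:rdBiereValid} for the dual encoding $\encodinga'$, whose final paragraph uses exactly your bridge: walks with all consecutive pairs in $\graph(\delta)$ correspond to action sequences traversing distinct states, with the self-loop mismatch dismissed just as you dismiss it. So the proposal is correct and takes essentially the same approach; no changes are needed.
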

}

\citeauthor{KroeningS03}~\citeyear{KroeningS03} use sorting networks~\cite{DBLP:books/lib/Knuth98a} to devise another encoding of the above question whose size grows linearly in $k\log^2(k)$.
However, they report that, due to hidden constants, their encoding is only significantly smaller than the encoding by \citeauthor{BiereCCZ99} when $150 < k$.
Since this is typically well beyond recurrence diameters that can be practically computed, we only implement the encoding of \citeauthor{BiereCCZ99}.\footnote{The largest $\recurrenceDiam$ we computed in all our experiments was 93.}

We use an SMT solver to reason about the encoding of $\recurrenceDiam$.
Thus, for decidability as well as efficiency reasons, we would like to obtain an encoding that is quantifier free, in particular, one that fits the theory of quantifier free uninterpreted functions.
Since $\encodinga$ is universally quantified, we reformulate it to its existentially quantified dual.
For predicates $Q$ and $P$ of arity $n$, let $\bigwedge Q(T). \; P(T)$ denote the conjunction of $P(T)$, for all $n$-tuples $T$ where $Q(T)$ holds.
Also, let $\bigvee$ denote the analogous disjunction.
Note: $\bigwedge Q(T). \; P(T)$ is only well defined if $Q$ is true for only a finite set of $n$-tuples.
Also, we do not explicitly bind $Q$ or the tuple $T$ when it is clear from context.

\newcommand{\pathstate}{\ensuremath{y}}
{
\newcommand{\inspace}{\frak{G}}
\renewcommand{\underset}[2]{#2#1.\; }
\begin{myenc}
\label{enc:rdBiere}
For  $\delta$ and $0\leq k$, let $\encodinga'(\delta,k)$ denote the conjunction of the formulae
\begin{enumerate}
  \item $\underset{(\state,\state')\in\graph(\delta)}{\bigwedge}\graph(\state,\state')$, 
  \item $\underset{\{\state,\state'\}\subseteq\uniStates(\delta) \wedge (\state,\state')\not\in\graph(\delta)}{\bigwedge}\neg\graph(\state,\state')$,
  \item ${\underset{1\leq i\leq k}{\bigwedge}} (\graph(\pathstate_i,\pathstate_{i+1}) \wedge \underset{i<j\leq k+1}{\bigwedge}$\\ $ \pathstate_i\neq\pathstate_j)$, and
  \item ${\underset{1\leq i\leq k+1}{\bigwedge}} (\underset{\state\in\uniStates(\delta)}{\bigvee} \pathstate_i=\state)$.
\end{enumerate}
\end{myenc}
}

The SMT formula above is defined over one constant $\state$ for every state in $\uniStates(\delta)$, a set of uninterpreted constants $\{\pathstate_i\mid 1\leq i \leq k+1\}$, one for every state in the simple path of length $k+1$ for which we search, and a function $\graph$ that is true for a pair of constants $(\state, \state')$ iff there is an edge from $\state$ to state $\state'$ in the state space of $\delta$.
\begin{mythm}
\label{thm:rdBiereValid}
$\encodinga'(\delta,k)$ is satisfiable iff $k \leq \recurrenceDiam(\delta)$.
\end{mythm}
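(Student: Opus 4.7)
The plan is to prove the biconditional by constructing an explicit witness in each direction: a model of $\encodinga'(\delta,k)$ from a simple path, and a simple path from any satisfying model. The backward direction is essentially a canonical-model argument, while the forward direction requires a careful readout that relies on clauses (2) and (4) of Encoding~\ref{enc:rdBiere} to pin down enough of the model's behaviour.

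For the backward direction, assuming $k \leq \recurrenceDiam(\delta)$, I invoke Definition~\ref{def:diamrd} to obtain a simple path $\pathstate_1,\ldots,\pathstate_{k+1}$ of length $k$ in $\graph(\delta)$. I then exhibit a model $\mathcal{M}$ whose domain is $\uniStates(\delta)$, interpreting each formal constant $\state$ as the state $\state$ itself, the predicate $\graph$ as the characteristic function of the edge relation $\graph(\delta)$, and each uninterpreted constant $\pathstate_i$ as the $i$-th state on the chosen path. Clauses (1) and (2) hold by construction of $\graph^{\mathcal{M}}$; clause (4) holds because every $\pathstate_i^{\mathcal{M}}$ is a concrete element of $\uniStates(\delta)$; and clause (3) holds because the chosen path is simple (distinctness) and its consecutive pairs are edges of $\graph(\delta)$.

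For the forward direction, suppose $\mathcal{M} \models \encodinga'(\delta,k)$. From clause (4), for each $i \in \{1,\ldots,k+1\}$ there is a state $f(i) \in \uniStates(\delta)$ with $\pathstate_i^{\mathcal{M}} = \state_{f(i)}^{\mathcal{M}}$; pick one such $f(i)$ arbitrarily. I would first show that $f$ is injective: if $f(i) = f(j)$ for $i \neq j$, then $\state_{f(i)}^{\mathcal{M}} = \state_{f(j)}^{\mathcal{M}}$ and hence $\pathstate_i^{\mathcal{M}} = \pathstate_j^{\mathcal{M}}$, contradicting the $\pathstate_i \neq \pathstate_j$ conjuncts of clause (3). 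I would then show $(f(i),f(i+1)) \in \graph(\delta)$ for every $1 \leq i \leq k$: clause (3) gives $\graph^{\mathcal{M}}(\state_{f(i)}^{\mathcal{M}}, \state_{f(i+1)}^{\mathcal{M}})$, and clause (2) forces the underlying pair to lie in $\graph(\delta)$---otherwise the model would be required to assign $\graph^{\mathcal{M}}$ the value false there. Stitching these together yields a simple path $f(1),\ldots,f(k+1)$ of length $k$ in $\graph(\delta)$, so $\recurrenceDiam(\delta) \geq k$.

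The main obstacle is resisting the temptation to identify $\mathcal{M}$ with the ``intended'' interpretation: a priori, constants for distinct states of $\uniStates(\delta)$ need not be interpreted as distinct elements of $\mathcal{M}$, and $\graph^{\mathcal{M}}$ may be defined on spurious elements outside the image of the state constants. The readout above sidesteps this by using clause (4) to pull each $\pathstate_i$ back to some state constant and then appealing to clause (2) only on pairs of state constants, where its truth is dictated by the concrete edge relation. Clause (1) plays no role in this direction; it is only needed in the backward construction to ensure the built model validates every actual edge.
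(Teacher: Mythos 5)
Your proposal is correct and follows essentially the same route as the paper: one direction constructs the intended model from a simple path witnessing $k \leq \recurrenceDiam(\delta)$ (the paper's Lemma~\ref{lem:encodingacomplete}), the other reads a distinct-state edge-path out of an arbitrary satisfying model via clauses (2)--(4) (the paper's Lemma~\ref{lem:encodingasound}), and both are then linked to $\recurrenceDiam$ through the correspondence between edges of $\graph(\delta)$ and action executions. Your forward direction is somewhat more careful than the paper's proof summary about the fact that the model need not be the intended interpretation, but this is a refinement of the same argument rather than a different one.
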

\newcommand{\model}{\ensuremath{\mathcal{M}}}
\newcommand{\entail}{\ensuremath{\vDash}}
\begin{proof}[Proof]
\newcommand{\signature}{\ensuremath{\Sigma}}

An SMT formula is defined over a signature $\signature$, which is a finite set of symbols that are either constants, uninterpreted constants, or the standard logical connectives.
A model $\model$ for a signature is a function that maps uninterpreted constants to objects.
A model $\model$ entails a formula $\phi$, denoted $\model \entail \phi$, iff $\phi$ evaluates to true, under the standard interpretation of logical connectives, after each uninterpreted constant $v$ in $\phi$ is substituted by $\model(v)$.
A formula $\phi$ is satisfiable iff $\exists \model. \model \entail \phi$.

\newcommand{\enc}[2]{\ensuremath{enc(#1,#2)}}

\begin{mylem}
\label{lem:encodingasound}
If $\encodinga'(\delta,k)$ is satisfiable, then there is a list of distinct states $[\state_1,\state_2\dots\state_{k+1}]$, such that $(\state_i,\state_{i+1})\in\graph(\delta)$, for $1\leq i\leq k$.
\end{mylem}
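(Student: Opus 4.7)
The plan is to unpack a satisfying SMT model of $\encodinga'(\delta,k)$ and read off the desired simple path directly from the interpretations of the existential witnesses $\pathstate_1,\ldots,\pathstate_{k+1}$. Concretely, I would assume $\model \entail \encodinga'(\delta,k)$, and then for each $i\in\{1,\ldots,k+1\}$ use conjunct~(4), which gives $\model \entail \bigvee_{\state\in\uniStates(\delta)} \pathstate_i = \state$, to pick a witness $\state_i \in \uniStates(\delta)$ with $\model \entail \pathstate_i = \state_i$. The claim will be that $[\state_1,\ldots,\state_{k+1}]$ already is the desired list.

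Next I would verify the two properties demanded by the lemma. For the consecutive-edge property, conjunct~(3) entails $\model \entail \graph(\pathstate_i,\pathstate_{i+1})$ for every $1\leq i\leq k$, so substituting the equalities $\pathstate_i = \state_i$ and $\pathstate_{i+1} = \state_{i+1}$ yields $\model \entail \graph(\state_i,\state_{i+1})$. Conjunct~(2) asserts $\neg\graph(\state,\state')$ for every pair with $\state,\state'\in\uniStates(\delta)$ and $(\state,\state')\notin\graph(\delta)$; contrapositively, because $\state_i$ and $\state_{i+1}$ both lie in $\uniStates(\delta)$, the truth of $\graph(\state_i,\state_{i+1})$ under $\model$ forces $(\state_i,\state_{i+1})\in\graph(\delta)$. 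For distinctness, conjunct~(3) additionally gives $\model \entail \pathstate_i \neq \pathstate_j$ whenever $i<j$; if $\state_i = \state_j$ held for distinct indices, transitivity of equality in $\model$ would yield $\model \entail \pathstate_i = \pathstate_j$, a direct contradiction.

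The proof is essentially bookkeeping about how a model realises the encoding, so there is no single hard obstacle. The one step that genuinely requires attention is the use of conjunct~(2): without it, a model would be free to make $\graph$-atoms true on pairs of valid states that are not actually edges of $\graph(\delta)$, and the reconstructed list would fail to be a path in the state space. Having conjuncts~(1) and~(2) together pins down the interpretation of $\graph$ on $\uniStates(\delta)\times\uniStates(\delta)$ to coincide exactly with the edge relation of $\graph(\delta)$, which is what makes the direct readout of $[\state_1,\ldots,\state_{k+1}]$ succeed.
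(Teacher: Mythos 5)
Your proof is correct and follows essentially the same route as the paper's: extract a model, use conjunct~(4) to identify each $\pathstate_i$ with a valid state, conjunct~(3) for the $\graph$-atoms and pairwise distinctness, and conjunct~(2) to transfer truth of $\graph(\state_i,\state_{i+1})$ in the model back to membership in $\graph(\delta)$. Your write-up is in fact more explicit than the paper's proof summary about which conjunct carries which burden, but the underlying argument is the same.
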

\begin{proof}[Proof summary]

Now, since the formula $\encodinga'(\delta,k)$ is satisfiable, there is a model $\model$, s.t. $\model\entail\encodinga'(\delta,k)$.
From the definition of entailment and the third conjunct of $\encodinga'(\delta,k)$, we have that $\graph(\model(\pathstate_i),\model(\pathstate_{i+1}))$ and $\model(\pathstate_i)\neq\model(\pathstate_j)$ hold, for all $1\leq i\leq k$ and 
$i<j\leq k$.
From this, the first, the second, and fourth conjuncts of $\encodinga'(\delta,k)$
,
we have that $(\model(\pathstate_i),\model(\pathstate_{i+1}))\in\graph(\delta)$.
This finishes our proof.
\end{proof}

\begin{mylem}
\label{lem:encodingacomplete}
If there is a list of distinct states $[\state_1,\state_2\dots\state_{k+1}]$, such that $(\state_i,\state_{i+1})\in\graph(\delta)$, for $1\leq i\leq k$, then $\encodinga'(\delta,k)$ is satisfiable.
\end{mylem}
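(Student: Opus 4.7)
The plan is to construct a specific model $\model$ that satisfies $\encodinga'(\delta,k)$. Given the simple path $[\state_1, \state_2, \dots, \state_{k+1}]$ from the hypothesis, I would let $\model$ interpret each state constant $\state \in \uniStates(\delta)$ as itself (so that distinct state constants denote distinct semantic objects), each uninterpreted path constant $\pathstate_i$ for $1 \leq i \leq k+1$ as the state $\state_i$ from the given path, and the binary predicate symbol $\graph$ as the edge relation $\graph(\delta)$. This is the natural dual of the construction implicit in Lemma~\ref{lem:encodingasound}: where soundness extracts a path from a model, completeness builds a model from a path.

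I would then verify that $\model$ entails each of the four conjuncts of Encoding~\ref{enc:rdBiere} in turn. Conjuncts~(1) and~(2) follow immediately from the choice of interpretation for $\graph$: every ordered pair in $\graph(\delta)$ satisfies $\graph(\state,\state')$ under $\model$, and every ordered pair of valid states not in $\graph(\delta)$ satisfies $\neg\graph(\state,\state')$. For conjunct~(3), the edge requirement $\graph(\pathstate_i,\pathstate_{i+1})$ reduces to $(\state_i,\state_{i+1})\in\graph(\delta)$, which is assumed for $1\leq i\leq k$, while the distinctness constraints $\pathstate_i\neq\pathstate_j$ for $i<j\leq k+1$ hold because the states along the given path are pairwise distinct. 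For conjunct~(4), each $\model(\pathstate_i)=\state_i$ lies in $\uniStates(\delta)$, so the disjunction is satisfied by choosing the disjunct corresponding to $\state=\state_i$.

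The proof is essentially bookkeeping, so I do not expect a serious obstacle. The only subtle point is to ensure that the semantic domain separates distinct valid states; otherwise conjunct~(2), which forces $\neg\graph(\state,\state')$ on non-edge pairs, could clash with the pairwise inequalities $\pathstate_i\neq\pathstate_j$ enforced by conjunct~(3) together with the matching clauses from conjunct~(4). Interpreting each state constant as itself, as above, sidesteps this issue cleanly and completes the argument.
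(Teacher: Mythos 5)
Your proposal is correct and matches the paper's own argument: the paper likewise defines $\model(\pathstate_i)=\state_i$ for $1\leq i\leq k+1$ and observes that entailment of $\encodinga'(\delta,k)$ then follows from the hypotheses; you merely spell out the per-conjunct verification that the paper leaves implicit. The remark about keeping distinct valid states semantically distinct is handled in the paper's setup by treating the states of $\uniStates(\delta)$ as interpreted constants, exactly as you do.
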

\begin{proof}[Proof summary]
Consider the model $\model$ defined as $\model(\pathstate_i) = \state_i$, if $1\leq i\leq k+1$.
Note that $\model$ is well-defined for the set of uninterpreted constants in $\encodinga'(\delta,k)$, i.e.\ it is well-defined for the set $\{\pathstate_i\mid 1\leq i\leq k+1\}$.
From the assumptions of this lemma and the definition of $\encodinga'(\delta,k)$, we have that $\model\entail\encodinga'(\delta,k)$.
This finishes our proof.
\end{proof}

From the definition of $\recurrenceDiam$, there is a list of actions $\as_k\equiv[\act_1,\act_2\dots\act_k]$ and a state $\state_1\in\uniStates(\delta)$, s.t.\ $\exec{\state_1}{\as_k}$ traverses distinct states, iff $k<\recurrenceDiam(\delta)$.
Also, from the definition of $\graph(\delta)$, for any states $\state$ and $\state'$, there is an action $\act_i\in\delta$ s.t.\ $\state' = \exec{\state}{\act_i}$ iff $(\state,\state')\in\graph(\delta)$.
Accordingly, there is a list of distinct states $[\state_1,\state_2\dots\state_{k+1}]$, s.t.\ $(\state_i,\state_{i+1})\in\graph(\delta)$, for $1\leq i\leq k$, iff $k\leq\recurrenceDiam(\delta)$.
The theorem follows from this and Lemmas~\ref{lem:encodingasound}~and~\ref{lem:encodingacomplete}.
\end{proof}

To use the above encoding to compute $\recurrenceDiam$ of a given system $\delta$, we iteratively query an SMT solver to check for the satisfiability of $\encodinga'(\delta,k)$ for different values of $k$, starting at 1, until the we have an unsatisfiable formula.
The largest $k$ for which the formula is satisfiable is $\recurrenceDiam(\delta)$.

Observe that, to use Encoding~\ref{enc:rdBiere}, one has to build the entire state space as a part of building the encoding, i.e. one has to build the graph $\graph(\delta)$ and include it in the encoding.
In fact, this is true for both methods, the one by \citeauthor{BiereCCZ99} and the one by \citeauthor{KroeningS03}, as they are both specified in terms of explicitly represented transition systems.
This means that the worst-case complexity of computing $\recurrenceDiam$ using either one of those encodings is doubly-exponential.
Indeed, this is the best possible wort-case running time for succinct graphs generally, unless the polynomial hierarchy collapses, since computing $\recurrenceDiam$ is NEXP-hard.

\paragraph{Experimental evaluation} We use Encoding~\ref{enc:rdBiere} as a base case function for the compositional algorithm by~\citeauthor{icaps2017}~\citeyear{icaps2017} instead of $\traversalDiam$, which was used as a base case by~\citeauthor{abdulaziz:2019}~\citeyear{abdulaziz:2019} and led to the tightest bounds of any existing method.
We use Yices 2.6.1~\cite{DBLP:conf/cav/Dutertre14} as the SMT solver to prove the satisfiability or unsatisfiability of the resulting SMT formulae.
We run the bounding algorithm by~\citeauthor{icaps2017}~\citeyear{icaps2017} on standard planning benchmarks (from previous planning competitions and ones we modified), once with $\traversalDiam$ as a base case and a second time with $\recurrenceDiam$ as a base case.
We perform our experiments on a cluster of 2.3GHz Intel Xeon machines with a timeout of 20~minutes and a memory limit of 4GB.
Our experiments show that Encoding~\ref{enc:rdBiere} is not practical for planning problems when used as a base case function for the algorithm by~\citeauthor{icaps2017}~\citeyear{icaps2017}, where bounds are only computed within the timeout for less than 0.1\% of our set of benchmarks.
This is because computing $\recurrenceDiam$ can take time that is exponential in the size of the state space, while computing $\traversalDiam$ can be computed in time that is linear in the state space~\cite{abdulaziz:2019}.

\section{A Compact Encoding of Recurrence Diameter}

We now devise a new encoding that performs better than Encoding~\ref{enc:rdBiere}.
The new encoding exploits the factored representation in a way that is reminiscent to encodings used for SAT-based planning~\cite{kautz:selman:92}.
In particular, our aim is to avoid constructing the state space in an explicit form, whenever possible.
We devise a new encoding that avoids building the state space as a part of the encoding and, effectively, we let the SMT solver build as much of it during its search as needed.
\newcommand{\encodingb}{\ensuremath{\phi_2}}
\newcommand{\encodingba}{\ensuremath{\phi_2'}}
{
\renewcommand{\underset}[2]{#2#1.\; }
\begin{myenc}
\label{enc:rdnew}
For a state $\state$, let $\state_i$ denote the formula $(\underset{\v\in\state}{\bigwedge}\v_i)\wedge(\underset{\overline{\v}\in\state}{\bigwedge}\neg\v_i)$.
For $\delta$ and $0\leq k$, let $\encodingb(\delta,k)$ denote the conjunction of the formulae
\begin{enumerate}
  \item $\underset{1\leq i \leq k}{\bigwedge}\act_i\rightarrow\apre(\act)_i\wedge\aeff(\act)_{i+1}\wedge(\underset{\v\in\dom(\delta)\setminus\dom(\aeff(\act))} {\bigwedge}\v_i\leftrightarrow\v_{i+1})$, 
  \item ${\underset{1\leq i \leq k}{\bigwedge}}\underset{\act\in\delta}{\bigvee}\act_i$, and
  \item $\underset{1\leq i < j \leq k + 1}{\bigwedge}\underset{v\in\dom(\delta)}{\bigvee}\v_i\neq\v_j$.
\end{enumerate}
\end{myenc}
}

Briefly, the encoding above states that $k$ is not $\recurrenceDiam$ if there is a sequence of $k$ actions that traverses only distinct states if executed at some valid state.
In more detail, the following are the intuitive meanings of uninterpreted constants in the above formulae: (i) $\act_i$, for all $1\leq i\leq k$ and $\act\in\delta$, is a Boolean variable that represents whether action $\act$ is executed at state $i$, and (ii) $\v_i$, for all $1\leq i\leq k + 1$ and $\v\in\delta$, represents the truth value of state variable $\v$ at state $i$.\footnote{We note that this encoding can easily be formulated as a propositional formula in conjunctive normal format.}

There are three main conjuncts in the encoding.
The first conjunct formalises the fact that, if an action is executed at state $i$, then all of its preconditions hold at state $i$, all of its effects hold at state $i+1$, and all the variables that are not in the effects will continue to have the same value at state $i+1$ as they did at state $i$ (i.e. the frame axiom).
The second conjunct states that at least one action must execute at state $i$.
The third conjunct states that all states are pairwise distinct by stating that for every two states, at least one variable has a different truth value in both states.

\begin{mythm}
\label{thm:rdnewValid}
$\encodingb(\delta,k)$ is satisfiable iff $k \leq \recurrenceDiam(\delta)$.
\end{mythm}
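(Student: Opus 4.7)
The plan is to mirror the two-lemma structure used for Theorem~\ref{thm:rdBiereValid}: prove a soundness lemma asserting that satisfiability of $\encodingb(\delta,k)$ yields a simple path of length $k$ in $\graph(\delta)$, prove a completeness lemma for the converse, and then combine them with Definition~\ref{def:diamrd} to conclude the biconditional.

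For soundness, suppose $\model \entail \encodingb(\delta,k)$. Extract, for each $1\leq i\leq k+1$, a state $\state_i\in\uniStates(\delta)$ by setting $\state_i(\v)=\model(\v_i)$ for every $\v\in\dom(\delta)$. Conjunct~3 of Encoding~\ref{enc:rdnew} directly gives $\state_i\neq\state_j$ whenever $i\neq j$. Conjunct~2 guarantees that at each step $i$ there is at least one $\act\in\delta$ with $\model(\act_i)=\top$; pick one such $\act$. Conjunct~1 then delivers $\apre(\act)\subseteq\state_i$, $\aeff(\act)\subseteq\state_{i+1}$, and $\state_i(\v)=\state_{i+1}(\v)$ for all $\v\notin\dom(\aeff(\act))$, which by Definition~\ref{def:exec} is exactly $\state_{i+1}=\exec{\state_i}{\act}$, hence $(\state_i,\state_{i+1})\in\graph(\delta)$. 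For completeness, if $k\leq\recurrenceDiam(\delta)$, Definition~\ref{def:diamrd} supplies a state $\state_1$ and an action list $[\act_1,\ldots,\act_k]$ whose execution at $\state_1$ visits distinct states $\state_1,\ldots,\state_{k+1}$. Define $\model$ by $\model(\v_i)=\state_i(\v)$ and, at each step $i$, $\model(\act_i')=\top$ precisely when $\act'$ equals the $i$-th action $\act_i$ of the witness list, $\bot$ otherwise. Conjunct~2 then holds because exactly one action constant per step is true; conjunct~3 holds because distinct $\state_i$'s must differ on at least one variable; conjunct~1 is vacuous for the unselected action constants and, for the selected $\act_i$, is exactly the restatement of $\state_{i+1}=\exec{\state_i}{\act_i}$ via Definition~\ref{def:exec}.

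The main subtlety I anticipate is that conjunct~1 is only an implication and conjunct~2 only demands that at least one action fire at each step, so a satisfying model may simultaneously activate several action constants at the same step, thereby imposing the joint precondition, effect, and frame constraints of multiple actions. This is harmless in the soundness direction: a single true $\act_i$ already witnesses the transition in $\graph(\delta)$, and the joint constraints imposed on $\model$ by the remaining activated actions are automatically consistent because $\model$ satisfies the encoding. In the completeness direction the issue is sidestepped by the choice of model, which activates exactly one action per step. A minor bookkeeping point is the translation between the maplet-set presentation of states and the propositional variables $\v_i$ used in the encoding, which is purely notational and already fixed by the preamble of Encoding~\ref{enc:rdnew}.
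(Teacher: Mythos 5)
Your proof is correct, but it takes a genuinely different route from the paper's. The paper does not argue directly from the semantics of Encoding~\ref{enc:rdnew}; instead it introduces an intermediate formula $\encodingba(\delta,k)$, which is $\encodingb(\delta,k)$ strengthened with mutual-exclusion constraints forcing at most one action per step, proves that $\encodingb$ and $\encodingba$ are equisatisfiable (the nontrivial direction builds a new model by disabling all but one enabled action at each step, using the frame part of conjunct~1 to argue all simultaneously enabled actions induce the same successor), and then closes by appealing to Theorem~\ref{thm:rdBiereValid} together with an asserted logical equivalence $\encodinga'(\delta,k)\leftrightarrow\encodingba(\delta,k)$, justified only by ``induction on $k$.'' You instead prove soundness and completeness directly against $\graph(\delta)$: in the soundness direction you sidestep the multiple-enabled-actions issue entirely by observing that any single true $\act_i$ already witnesses $(\state_i,\state_{i+1})\in\graph(\delta)$, so no model surgery is needed; in the completeness direction your constructed model activates exactly one action per step. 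What each approach buys: the paper's reduction reuses the already-proved Theorem~\ref{thm:rdBiereValid} and isolates the one subtlety of the encoding in a single lemma, at the cost of leaning on an equivalence between a factored and an explicit-state-space encoding that is only sketched; your argument is self-contained, arguably more rigorous on that point, and makes explicit the (easy but necessary) facts that distinct valid states differ on some variable and that an action contributing to a distinct-states path cannot be a no-op, so its precondition must hold. Both are valid proofs of the theorem.
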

\begin{proof}
Firstly, let $\encodingba(\delta,k)$ denote \[\encodingb(\delta,k) \wedge
\bigwedge{1\leq i \leq k}.{\bigwedge \act,\act'\in\delta\wedge \act\neq\act'.}\neg\act_i \vee \neg\act'_i.\]
\begin{mylem}
\label{lem:rdnewValid}
$\encodingb(\delta,k)$ is satisfiable iff $\encodingba(\delta,k)$ is satisfiable.
\end{mylem}
\begin{proof}[Proof summary]
$\Rightarrow$ Since $\encodingb(\delta,k)$ is satisfiable, then there is a model $\model$, s.t.\ $\model\entail\encodingb(\delta,k)$.
Note that $\model$ might not entail $\encodingba(\delta,k)$ because $\encodingba(\delta,k)$ has the extra conjunct that only one action is enabled in every step, i.e.\ at step $i$, if $\model\entail\act_i$ and $\model\entail\act'_i$, then $\act=\act'$.
Nonetheless, conjunct (i) of Encoding 2 necessitates that in order for an action to be enabled in a step, all variables that are not in its effect are left unchanged in the next step. 
Accordingly, all actions enabled at a step affect the same state variables and assign all of those variables to the same value.
Thus, we can construct a model that entails $\encodingba(\delta,k)$ by leaving only one action from the set of enabled actions at every step, and disabling the rest.
We formalise that as follows.
For every $0\leq i\leq k$, let $\Pi_i = \{\act_i \mid \act \in \delta \wedge \model\entail\act_i\}$, i.e.\ the set of actions enabled in step $i$.
Let $\choice$ be the choice function, i.e.\ the function that given a set, returns an element from that set if it is not empty, and that is otherwise undefined.
The model that entails $\encodingba(\delta,k)$ is $\model'$, defined as follows.
\[
  \model'(c) =
  \begin{cases}
     \bot, & \text{if } p\in\Pi_i\text{ and } \choice(\Pi_i)\neq p\\
     \model(c), & \text{otherwise}.\\
  \end{cases}
\]
$\Leftarrow$ Since $\encodingba(\delta,k)$ is the same as $\encodingb(\delta,k)$ conjoined with another formula, any model for $\encodingba(\delta,k)$ is a model for $\encodingb(\delta,k)$.
\end{proof}
The theorem follows from Lemma~\ref{lem:rdnewValid}, Theorem~\ref{thm:rdBiereValid}, and since $\encodinga'(\delta,k) \leftrightarrow \encodingba(\delta,k)$.
The latter fact follows by an induction on $k$, and from the definition of $\dom(\delta)$ and $\graph(\delta)$.
\end{proof}

\paragraph{Experimental evaluation} We experimentally test the new encoding as a base case function for the algorithm by~\citeauthor{icaps2017}~\citeyear{icaps2017}.
Columns 2 and 3 of Table~\ref{table:boundsDoms} show some data on the bounds computed with both, Encoding~\ref{enc:rdnew} (i.e.\ $\recurrenceDiam$) and $\traversalDiam$, as base case functions.
We note two observations.
Firstly, many more planning problems are successfully bounded within the timeout when Encoding~\ref{enc:rdnew} is used to compute $\recurrenceDiam$ compared to using Encoding~\ref{enc:rdBiere}.
Encoding~\ref{enc:rdnew} performs much better than Encoding~\ref{enc:rdBiere} in practice since our new encoding is represented in terms of the factored representation of the system, while Encoding~\ref{enc:rdBiere} represents the system as an explicitly represented state space.
This leads to exponentially smaller formulae: Encoding~\ref{enc:rdnew} grows quadratically with the size of the given factored system, while Encoding~\ref{enc:rdBiere} grows quadratically in the size of the state space, which can be exponentially larger than the given factored system.
Indeed, Encoding~\ref{enc:rdnew} delegates the construction of the explicit state space to the SMT solver, which would effectively construct the state space during its search, but lazily.
This is clearly better than constructing the state space a priori when the formula is satisfiable (i.e. when $k\leq\recurrenceDiam$) as the SMT solver only needs to find a simple path of length $k+1$.
The SMT solver does this without necessarily traversing the entire state space due to its search heuristics.
When the formula is unsatisfiable, the SMT solver has to perform an exhaustive search to produce a proof of unsatisfiability, which is equivalent to constructing the entire state space explicitly.
Since all queries to the SMT solver, except for the last one, are satisfiable, Encoding~\ref{enc:rdnew} is more practically efficient than using Encoding~\ref{enc:rdBiere}.
However, it is worth noting that Encodings~\ref{enc:rdBiere}~and~\ref{enc:rdnew} worst-case running times grow doubly-exponentially in the size of the given factored system.

Secondly, when $\recurrenceDiam$ is the base case function the bounds computed are much tighter than those computed when $\traversalDiam$ as a base case function.
This agrees with the theoretical prediction of Theorem~\ref{lem:rdexplttd}.
This is shown clearly in Figure~\ref{fig:boundcompareTDvsRD} and in Table~\ref{table:boundsDoms}. 
In particular, in the domains TPP, ParcPrinter, NoMystery, Logistics, OpenStacks, Woodworking, Satellites, Scanalyzer, Hyp and NewOpen (a compiled Qualitative Preference rovers domain), we have between two orders of magnitude and 50\% smaller bounds when $\recurrenceDiam$ is used as a base case function compared to $\traversalDiam$.
Also, the domain Visitall has twice as many problems whose bounds are less than $10^9$ when $\recurrenceDiam$ is used instead of $\traversalDiam$.
Also, specially interesting domains are Floortile and BlocksWorld, where the recurrence diameter of some of the smaller instances is successfully computed to be less than 10, but whose bounds using $\traversalDiam$ are more than $10^9$.
In contrast, equal bounds are found using $\recurrenceDiam$ and $\traversalDiam$ as base case functions in Zeno.

 \section{Further Experiments}
\begin{figure*}
\centering
\begin{minipage}[b]{0.32\textwidth}
\centering
       \includegraphics[width=1\textwidth,height=0.8\textwidth]{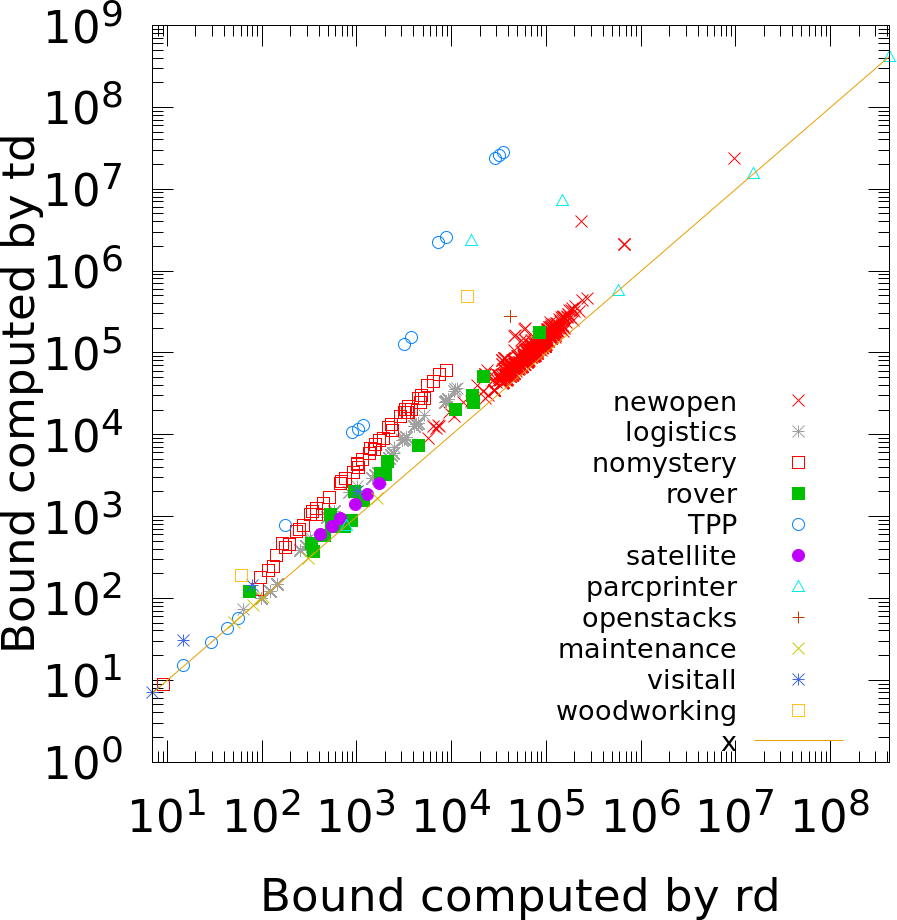}
\end{minipage}
\begin{minipage}[b]{0.32\textwidth}
\centering
        \includegraphics[width=1\textwidth,height=0.8\textwidth]{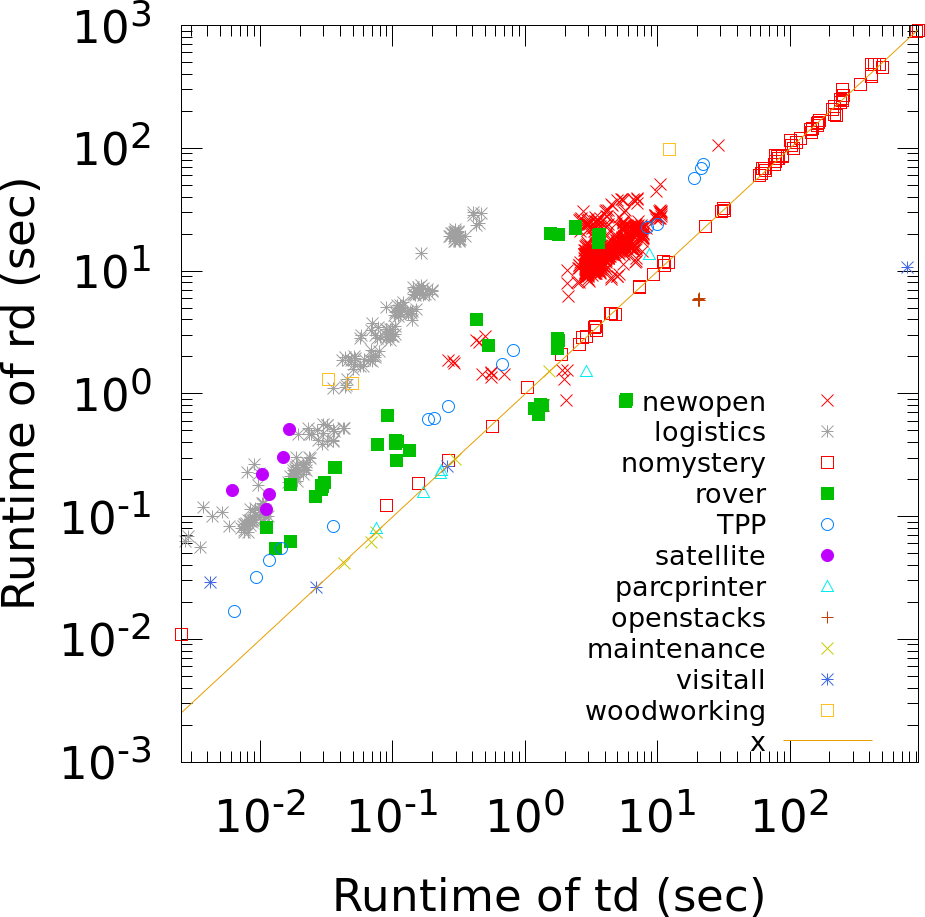}
\end{minipage}
\begin{minipage}[b]{0.32\textwidth}
        \includegraphics[width=1\textwidth,height=0.8\textwidth]{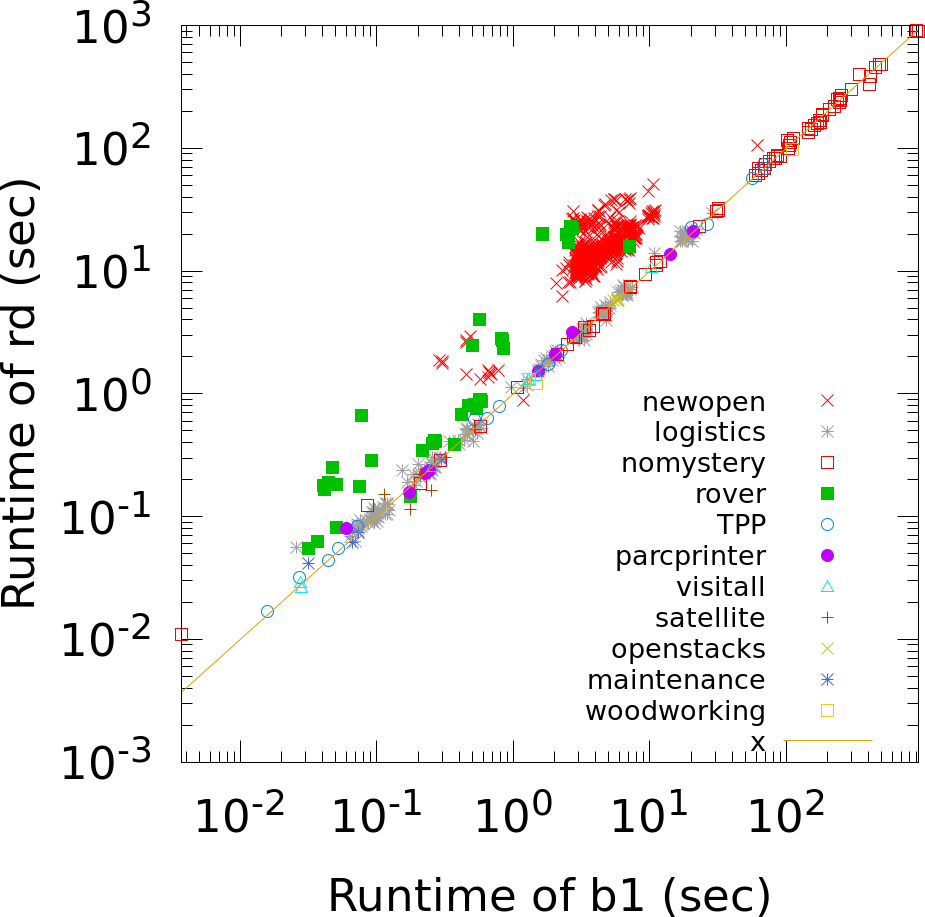}
\end{minipage}
\begin{minipage}[b]{0.32\textwidth}
        \includegraphics[width=1\textwidth,height=0.8\textwidth]{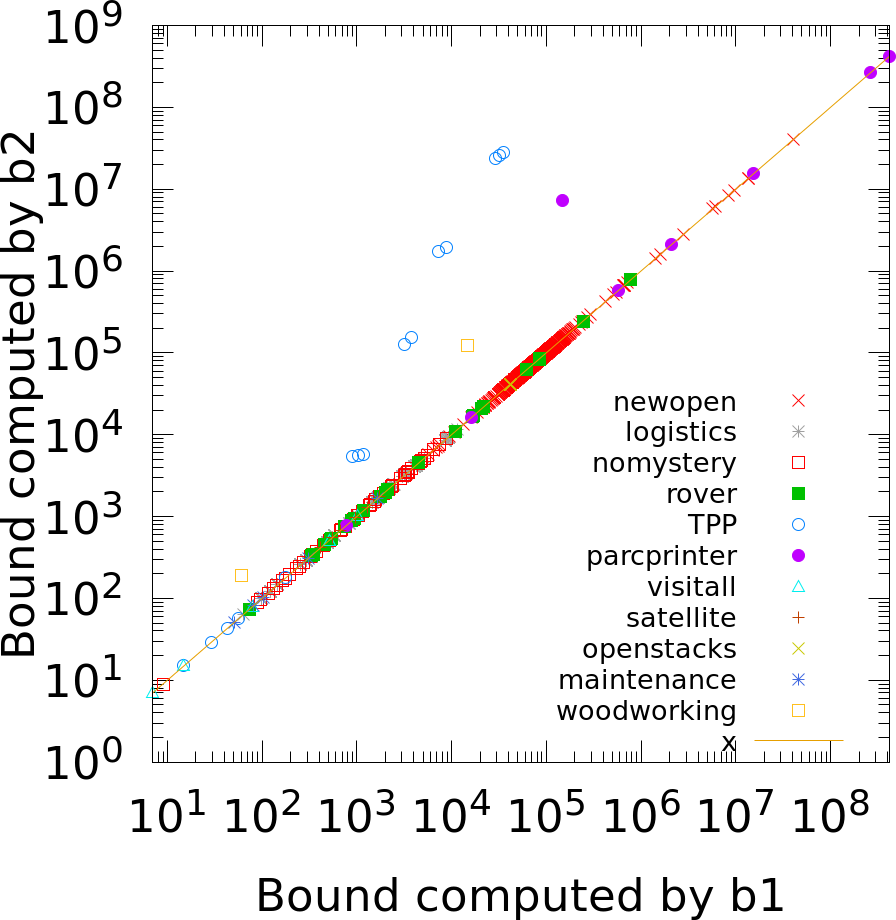}
\end{minipage}
\begin{minipage}[b]{0.32\textwidth}
\centering
        \includegraphics[width=1\textwidth,height=0.8\textwidth]{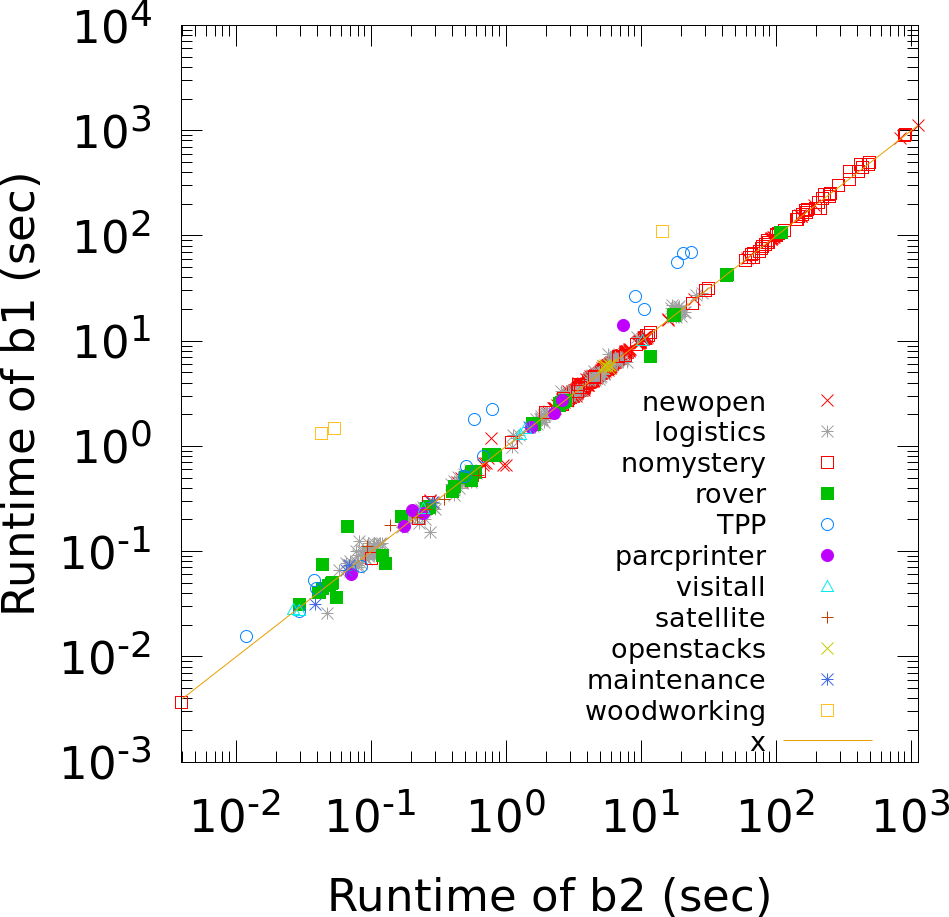}
\end{minipage}
\begin{minipage}[b]{0.32\textwidth}
\centering
       \includegraphics[width=1\textwidth,height=0.8\textwidth]{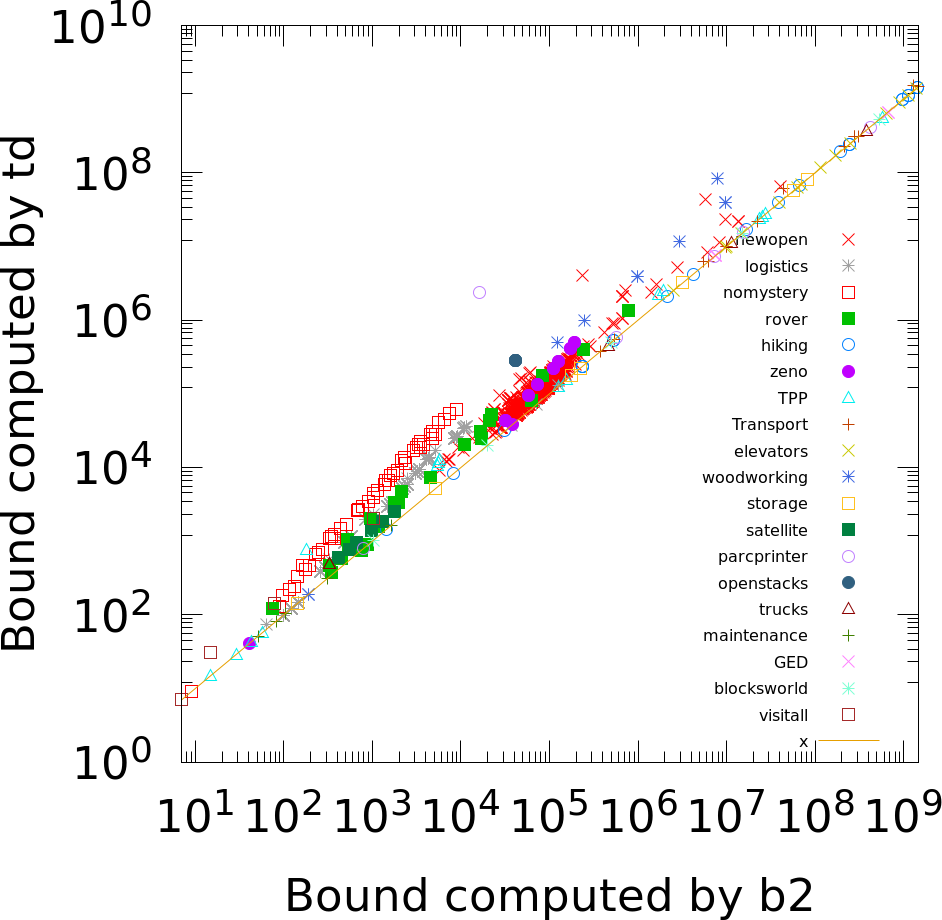}
\end{minipage}
\begin{minipage}[b]{0.32\textwidth}
\centering
\includegraphics[width=1\textwidth,height=0.8\textwidth]{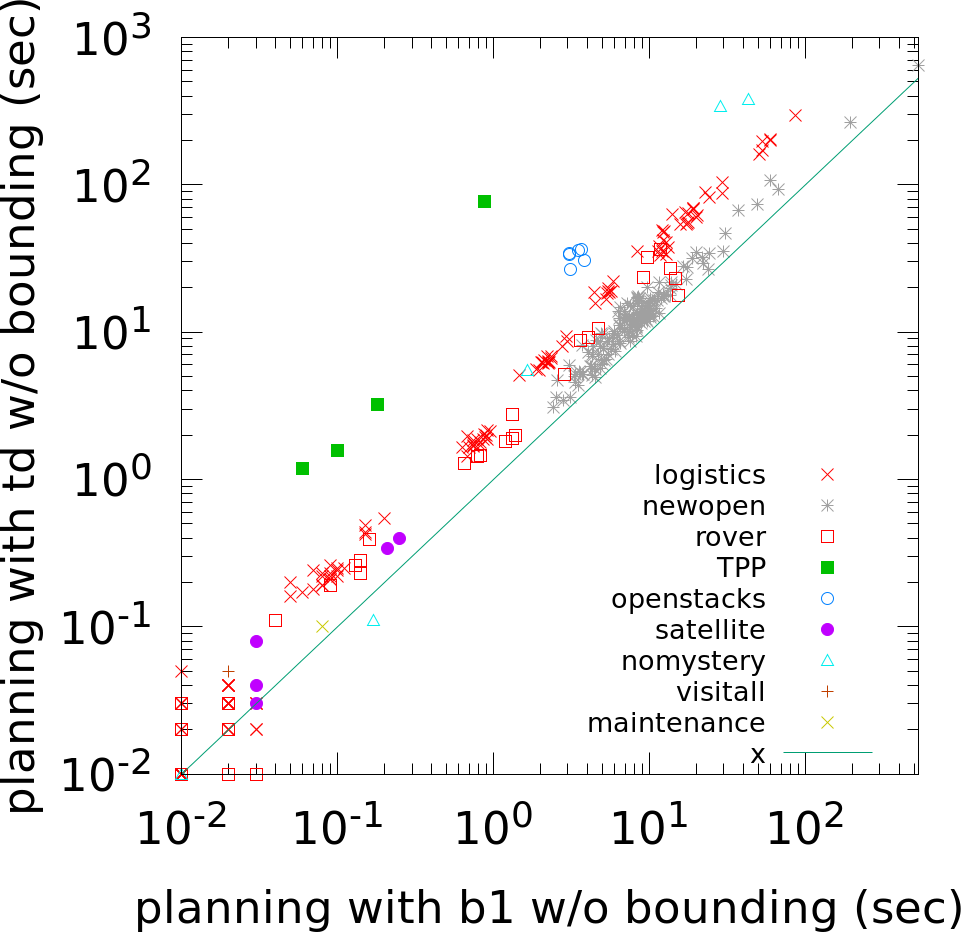}
\end{minipage}
\begin{minipage}[b]{0.32\textwidth}
\centering
\includegraphics[width=1\textwidth,height=0.8\textwidth]{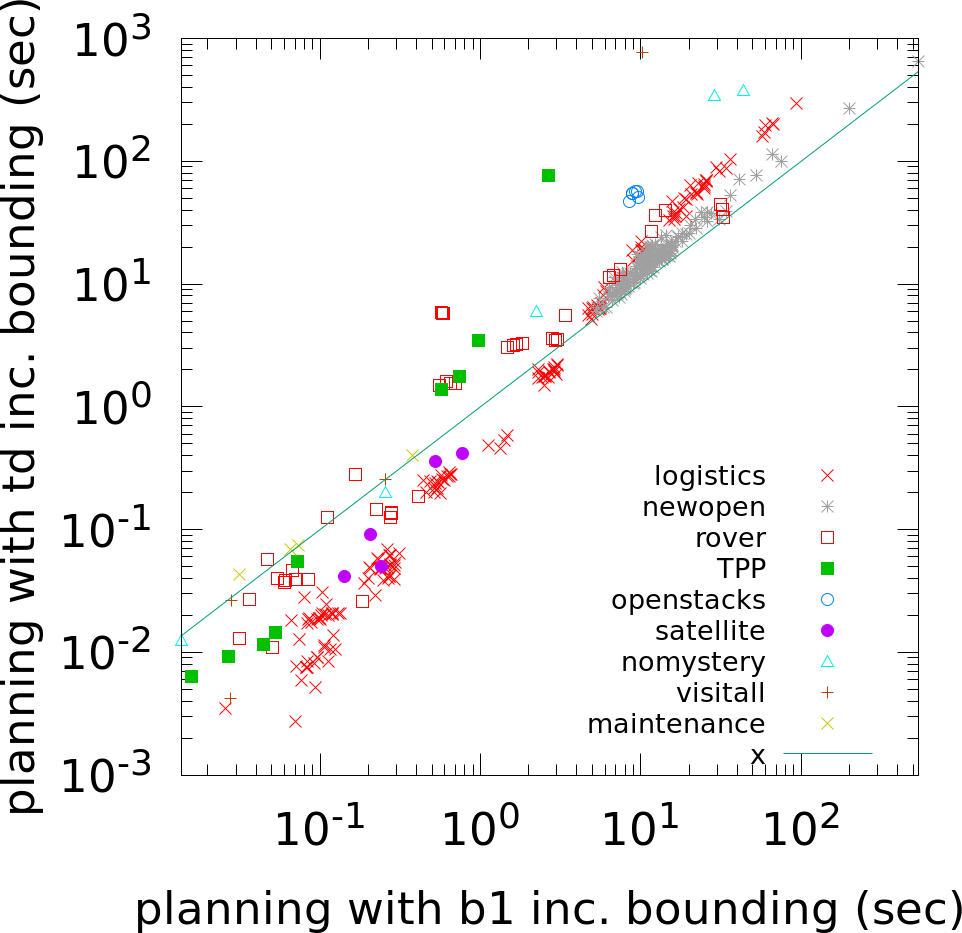}
\end{minipage}
\begin{minipage}[b]{0.32\textwidth}
\centering
\includegraphics[width=1\textwidth,height=0.8\textwidth]{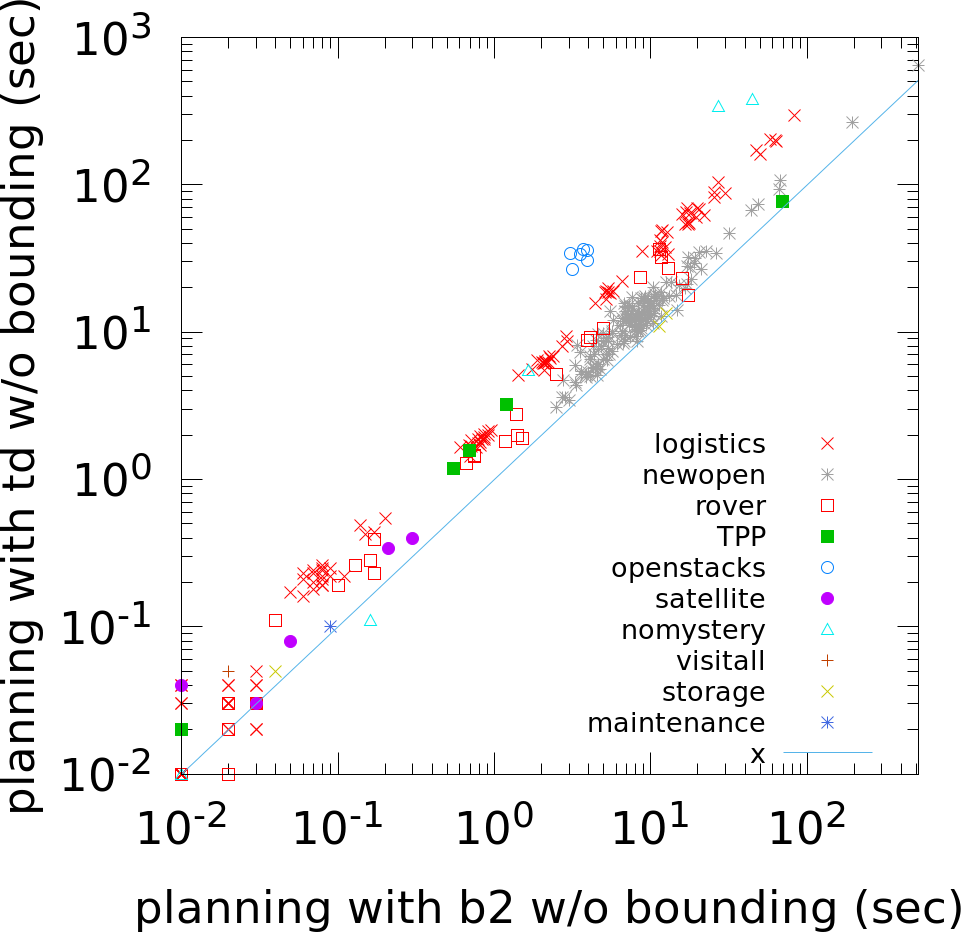}
\end{minipage}
\begin{minipage}[b]{0.32\textwidth}
\centering
\includegraphics[width=1\textwidth,height=0.8\textwidth]{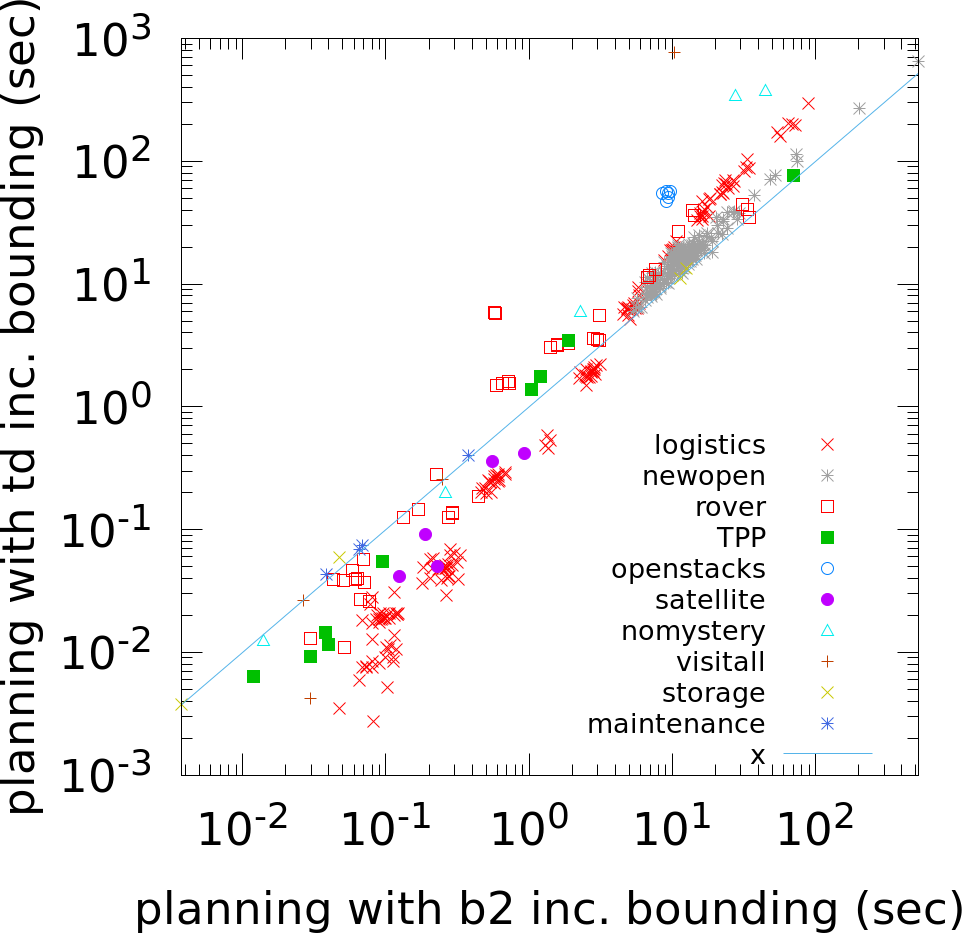}
\end{minipage}
\begin{minipage}[b]{0.32\textwidth}
\centering
\includegraphics[width=1\textwidth,height=0.8\textwidth]{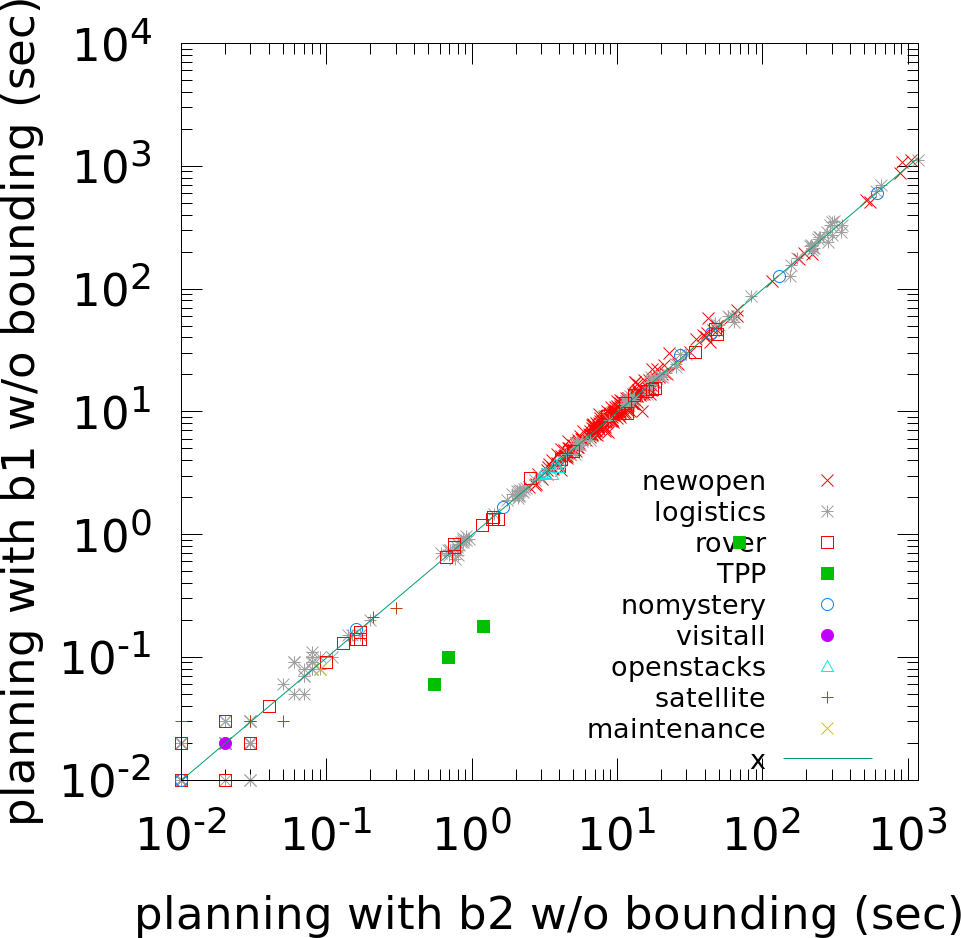}
\end{minipage}
\begin{minipage}[b]{0.32\textwidth}
\centering
\includegraphics[width=1\textwidth,height=0.8\textwidth]{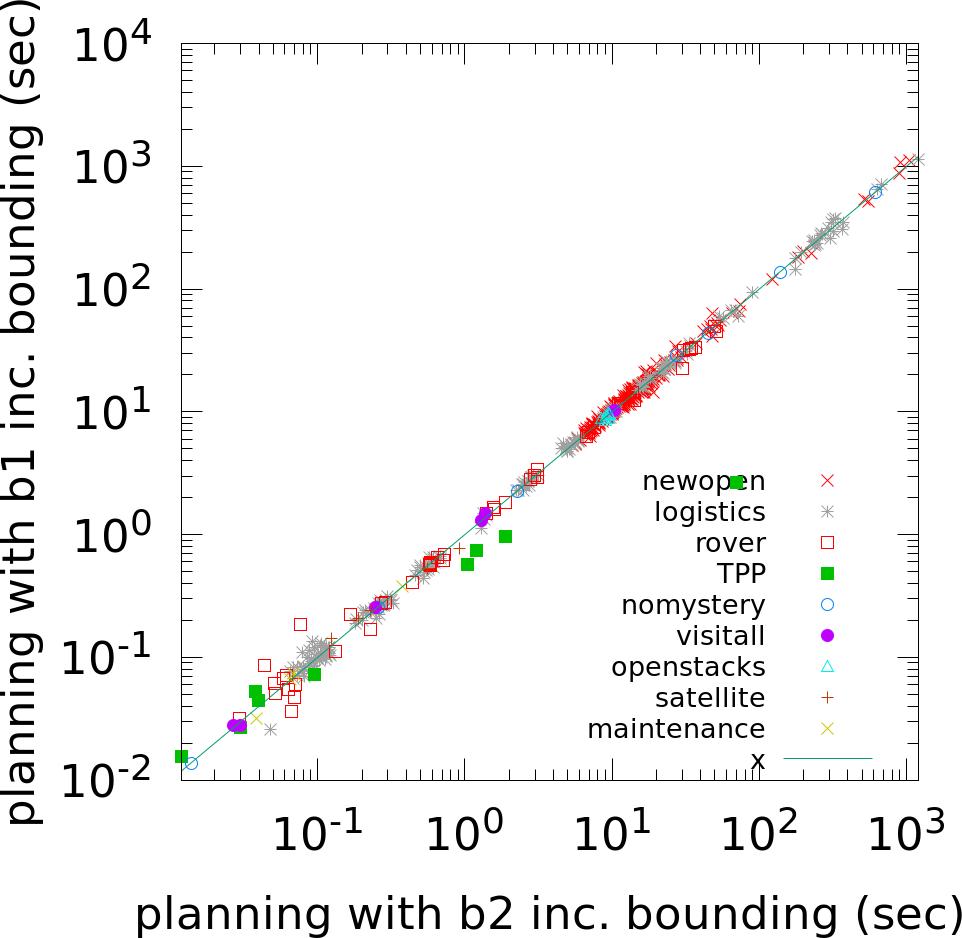}
\end{minipage}
     \caption{\label{fig:boundcompareTDvsRD}Different scatter-plots comparing the different bounding algorithms in terms of the quality of their bounds, the needed bound computation time, and the planning time using those bounds as a horizon for {\sc Mp}.}
\end{figure*}
{
\setlength\tabcolsep{1pt}
\def\arraystretch{0.9}
\begin{table*}[h]
\centering
\setlength{\tabcolsep}{3.5pt}
    \begin{tabularx}{0.99\textwidth}{ c | c c c c c | c c c c c | c c c c c | c c c c c }
    \hline
          \multicolumn{1}{c}{\scriptsize }       &\multicolumn{5}{c}{\scriptsize $\traversalDiam$}       &\multicolumn{5}{c}{\scriptsize $\recurrenceDiam$}       &\multicolumn{5}{c}{\scriptsize $\basecasefun_1$}       &\multicolumn{5}{c}{\scriptsize $\basecasefun_2$}\\ 
    \hline
{\scriptsize Domain}                      &{\scriptsize Min.} &{\scriptsize Max.} &{\scriptsize Avg.} &{\scriptsize \#Bnd.} &{\scriptsize \#Sol.} &{\scriptsize Min.} &{\scriptsize Max.} &{\scriptsize Avg.} &{\scriptsize \#Bnd.} &{\scriptsize \#Sol.} &{\scriptsize Min.} &{\scriptsize Max.} &{\scriptsize Avg.} &{\scriptsize \#Bnd.} &{\scriptsize \#Sol.} &{\scriptsize Min.} &{\scriptsize Max.} &{\scriptsize Avg.} &{\scriptsize \#Bnd.} &{\scriptsize \#Sol.}\\
    \hline
{\scriptsize newopen (1440) }       &{\scriptsize 3e3} &{\scriptsize 7e7  } &{\scriptsize 4e5  } &{\scriptsize 848} &{\scriptsize 157}       &{\scriptsize 2e3} &{\scriptsize 1e7  } &{\scriptsize 9e4  } &{\scriptsize 592} &{\scriptsize ----}       &{\scriptsize 2e3} &{\scriptsize 4e7  } &{\scriptsize 2e5  } &{\scriptsize 845} &{\scriptsize 218}       &{\scriptsize 2e3} &{\scriptsize 4e7  } &{\scriptsize 2e5  } &{\scriptsize 844} &{\scriptsize 219}\\  
{\scriptsize logistics (407) }       &{\scriptsize 7e1} &{\scriptsize 4e6  } &{\scriptsize 4e5  } &{\scriptsize 406} &{\scriptsize 170}       &{\scriptsize 6e1} &{\scriptsize 1e4  } &{\scriptsize 3e3  } &{\scriptsize 194} &{\scriptsize ----}       &{\scriptsize 6e1} &{\scriptsize 1e4  } &{\scriptsize 2e3  } &{\scriptsize 193} &{\scriptsize 192}       &{\scriptsize 6e1} &{\scriptsize 1e5  } &{\scriptsize 4e3  } &{\scriptsize 200} &{\scriptsize 195}\\  
{\scriptsize elevators (210) }       &{\scriptsize 3e6} &{\scriptsize 1e9  } &{\scriptsize 3e8  } &{\scriptsize 14} &{\scriptsize ----}       &{\scriptsize ----} &{\scriptsize ----  } &{\scriptsize ----  } &{\scriptsize ----} &{\scriptsize ----}       &{\scriptsize ----} &{\scriptsize ----  } &{\scriptsize ----  } &{\scriptsize ----} &{\scriptsize ----}       &{\scriptsize 3e6} &{\scriptsize 1e9  } &{\scriptsize 3e8  } &{\scriptsize 14} &{\scriptsize ----}\\  
{\scriptsize rover (141) }       &{\scriptsize 1e2} &{\scriptsize 1e6  } &{\scriptsize 1e5  } &{\scriptsize 51} &{\scriptsize 42}       &{\scriptsize 7e1} &{\scriptsize 9e4  } &{\scriptsize 1e4  } &{\scriptsize 38} &{\scriptsize ----}       &{\scriptsize 7e1} &{\scriptsize 8e5  } &{\scriptsize 7e4  } &{\scriptsize 52} &{\scriptsize 46}       &{\scriptsize 7e1} &{\scriptsize 8e5  } &{\scriptsize 7e4  } &{\scriptsize 52} &{\scriptsize 46}\\  
{\scriptsize nomystery (124) }       &{\scriptsize 9e0} &{\scriptsize 6e4  } &{\scriptsize 1e4  } &{\scriptsize 70} &{\scriptsize 6}       &{\scriptsize 9e0} &{\scriptsize 9e3  } &{\scriptsize 2e3  } &{\scriptsize 70} &{\scriptsize ----}       &{\scriptsize 9e0} &{\scriptsize 9e3  } &{\scriptsize 2e3  } &{\scriptsize 70} &{\scriptsize 8}       &{\scriptsize 9e0} &{\scriptsize 9e3  } &{\scriptsize 2e3  } &{\scriptsize 70} &{\scriptsize 7}\\  
{\scriptsize zeno (50) }       &{\scriptsize 4e1} &{\scriptsize 5e5  } &{\scriptsize 8e4  } &{\scriptsize 50} &{\scriptsize 31}       &{\scriptsize 4e1} &{\scriptsize 4e1  } &{\scriptsize 4e1  } &{\scriptsize 1} &{\scriptsize ----}       &{\scriptsize 4e1} &{\scriptsize 4e1  } &{\scriptsize 4e1  } &{\scriptsize 1} &{\scriptsize 1}       &{\scriptsize 4e1} &{\scriptsize 2e5  } &{\scriptsize 1e5  } &{\scriptsize 16} &{\scriptsize 1}\\  
{\scriptsize hiking (40) }       &{\scriptsize 1e3} &{\scriptsize 1e9  } &{\scriptsize 4e8  } &{\scriptsize 22} &{\scriptsize 1}       &{\scriptsize ----} &{\scriptsize ----  } &{\scriptsize ----  } &{\scriptsize ----} &{\scriptsize ----}       &{\scriptsize ----} &{\scriptsize ----  } &{\scriptsize ----  } &{\scriptsize ----} &{\scriptsize ----}       &{\scriptsize 1e3} &{\scriptsize 1e9  } &{\scriptsize 4e8  } &{\scriptsize 22} &{\scriptsize 1}\\  
{\scriptsize TPP (89) }       &{\scriptsize 2e1} &{\scriptsize 6e8  } &{\scriptsize 4e7  } &{\scriptsize 16} &{\scriptsize 9}       &{\scriptsize 2e1} &{\scriptsize 4e4  } &{\scriptsize 8e3  } &{\scriptsize 15} &{\scriptsize ----}       &{\scriptsize 2e1} &{\scriptsize 4e4  } &{\scriptsize 8e3  } &{\scriptsize 15} &{\scriptsize 14}       &{\scriptsize 2e1} &{\scriptsize 6e8  } &{\scriptsize 4e7  } &{\scriptsize 16} &{\scriptsize 9}\\  
{\scriptsize Transport (203) }       &{\scriptsize 4e5} &{\scriptsize 2e9  } &{\scriptsize 3e8  } &{\scriptsize 14} &{\scriptsize ----}       &{\scriptsize ----} &{\scriptsize ----  } &{\scriptsize ----  } &{\scriptsize ----} &{\scriptsize ----}       &{\scriptsize ----} &{\scriptsize ----  } &{\scriptsize ----  } &{\scriptsize ----} &{\scriptsize ----}       &{\scriptsize 4e5} &{\scriptsize 1e9  } &{\scriptsize 3e8  } &{\scriptsize 14} &{\scriptsize ----}\\  
{\scriptsize GED (97) }       &{\scriptsize 7e6} &{\scriptsize 7e8  } &{\scriptsize 3e8  } &{\scriptsize 5} &{\scriptsize ----}       &{\scriptsize ----} &{\scriptsize ----  } &{\scriptsize ----  } &{\scriptsize ----} &{\scriptsize ----}       &{\scriptsize ----} &{\scriptsize ----  } &{\scriptsize ----  } &{\scriptsize ----} &{\scriptsize ----}       &{\scriptsize 7e6} &{\scriptsize 7e8  } &{\scriptsize 3e8  } &{\scriptsize 5} &{\scriptsize ----}\\  
{\scriptsize woodworking (60) }       &{\scriptsize 2e2} &{\scriptsize 8e7  } &{\scriptsize 2e7  } &{\scriptsize 10} &{\scriptsize ----}       &{\scriptsize 6e1} &{\scriptsize 1e4  } &{\scriptsize 5e3  } &{\scriptsize 3} &{\scriptsize ----}       &{\scriptsize 6e1} &{\scriptsize 1e4  } &{\scriptsize 5e3  } &{\scriptsize 3} &{\scriptsize 1}       &{\scriptsize 2e2} &{\scriptsize 1e7  } &{\scriptsize 3e6  } &{\scriptsize 10} &{\scriptsize 1}\\  
{\scriptsize visitall (70) }       &{\scriptsize 7e0} &{\scriptsize 2e3  } &{\scriptsize 6e2  } &{\scriptsize 4} &{\scriptsize 4}       &{\scriptsize 7e0} &{\scriptsize 7e4  } &{\scriptsize 1e4  } &{\scriptsize 7} &{\scriptsize ----}       &{\scriptsize 7e0} &{\scriptsize 1e3  } &{\scriptsize 4e2  } &{\scriptsize 6} &{\scriptsize 6}       &{\scriptsize 7e0} &{\scriptsize 1e3  } &{\scriptsize 4e2  } &{\scriptsize 6} &{\scriptsize 6}\\  
{\scriptsize openstacks (111) }       &{\scriptsize 3e5} &{\scriptsize 3e5  } &{\scriptsize 3e5  } &{\scriptsize 6} &{\scriptsize 6}       &{\scriptsize 4e4} &{\scriptsize 4e4  } &{\scriptsize 4e4  } &{\scriptsize 6} &{\scriptsize ----}       &{\scriptsize 4e4} &{\scriptsize 4e4  } &{\scriptsize 4e4  } &{\scriptsize 6} &{\scriptsize 6}       &{\scriptsize 4e4} &{\scriptsize 4e4  } &{\scriptsize 4e4  } &{\scriptsize 6} &{\scriptsize 6}\\  
{\scriptsize satellite (10) }       &{\scriptsize 6e2} &{\scriptsize 6e3  } &{\scriptsize 3e3  } &{\scriptsize 10} &{\scriptsize 9}       &{\scriptsize 4e2} &{\scriptsize 2e3  } &{\scriptsize 9e2  } &{\scriptsize 6} &{\scriptsize ----}       &{\scriptsize 4e2} &{\scriptsize 2e3  } &{\scriptsize 9e2  } &{\scriptsize 6} &{\scriptsize 5}       &{\scriptsize 4e2} &{\scriptsize 2e3  } &{\scriptsize 9e2  } &{\scriptsize 6} &{\scriptsize 5}\\  
{\scriptsize scanalyzer (60) }       &{\scriptsize 4e3} &{\scriptsize 4e3  } &{\scriptsize 4e3  } &{\scriptsize 1} &{\scriptsize 1}       &{\scriptsize 6e1} &{\scriptsize 6e1  } &{\scriptsize 6e1  } &{\scriptsize 1} &{\scriptsize ----}       &{\scriptsize 6e1} &{\scriptsize 6e1  } &{\scriptsize 6e1  } &{\scriptsize 1} &{\scriptsize 1}       &{\scriptsize 4e3} &{\scriptsize 4e3  } &{\scriptsize 4e3  } &{\scriptsize 1} &{\scriptsize 1}\\  
{\scriptsize storage (30) }       &{\scriptsize 1e2} &{\scriptsize 8e7  } &{\scriptsize 2e7  } &{\scriptsize 7} &{\scriptsize 4}       &{\scriptsize 6e0} &{\scriptsize 6e0  } &{\scriptsize 6e0  } &{\scriptsize 1} &{\scriptsize ----}       &{\scriptsize 6e0} &{\scriptsize 6e0  } &{\scriptsize 6e0  } &{\scriptsize 1} &{\scriptsize 1}       &{\scriptsize 1e2} &{\scriptsize 8e7  } &{\scriptsize 2e7  } &{\scriptsize 7} &{\scriptsize 4}\\  
{\scriptsize trucks (33) }       &{\scriptsize 5e2} &{\scriptsize 4e8  } &{\scriptsize 8e7  } &{\scriptsize 5} &{\scriptsize 2}       &{\scriptsize 3e2} &{\scriptsize 3e2  } &{\scriptsize 3e2  } &{\scriptsize 2} &{\scriptsize ----}       &{\scriptsize 3e2} &{\scriptsize 3e2  } &{\scriptsize 3e2  } &{\scriptsize 2} &{\scriptsize 2}       &{\scriptsize 3e2} &{\scriptsize 4e8  } &{\scriptsize 8e7  } &{\scriptsize 5} &{\scriptsize 2}\\  
{\scriptsize parcprinter (40) }       &{\scriptsize 8e2} &{\scriptsize 4e8  } &{\scriptsize 7e7  } &{\scriptsize 6} &{\scriptsize 1}       &{\scriptsize 8e2} &{\scriptsize 4e8  } &{\scriptsize 8e7  } &{\scriptsize 9} &{\scriptsize ----}       &{\scriptsize 8e2} &{\scriptsize 4e8  } &{\scriptsize 8e7  } &{\scriptsize 9} &{\scriptsize 3}       &{\scriptsize 8e2} &{\scriptsize 4e8  } &{\scriptsize 9e7  } &{\scriptsize 8} &{\scriptsize 2}\\  
{\scriptsize maintenance (5) }       &{\scriptsize 5e1} &{\scriptsize 2e3  } &{\scriptsize 4e2  } &{\scriptsize 5} &{\scriptsize 4}       &{\scriptsize 5e1} &{\scriptsize 2e3  } &{\scriptsize 4e2  } &{\scriptsize 5} &{\scriptsize ----}       &{\scriptsize 5e1} &{\scriptsize 2e3  } &{\scriptsize 4e2  } &{\scriptsize 5} &{\scriptsize 4}       &{\scriptsize 5e1} &{\scriptsize 2e3  } &{\scriptsize 4e2  } &{\scriptsize 5} &{\scriptsize 4}\\  
{\scriptsize blocksworld (10) }       &{\scriptsize 1e3} &{\scriptsize 5e8  } &{\scriptsize 1e8  } &{\scriptsize 5} &{\scriptsize 2}       &{\scriptsize 8e0} &{\scriptsize 8e0  } &{\scriptsize 8e0  } &{\scriptsize 1} &{\scriptsize ----}       &{\scriptsize 8e0} &{\scriptsize 8e0  } &{\scriptsize 8e0  } &{\scriptsize 1} &{\scriptsize 1}       &{\scriptsize 1e3} &{\scriptsize 5e8  } &{\scriptsize 1e8  } &{\scriptsize 5} &{\scriptsize 2}\\  
         \hline
    \end{tabularx}
    \caption{\label{table:boundsDoms}
Column 1: the domain name and the number of instances in it.
Column 2: when using $\traversalDiam$ as a base case function: the minimum, maximum bound, the average bound computed, number of instances bounded (below $10^9$), and the number of instances solved using Madagascar with the bound as the horizon.
Column 3, 4, and 5: similar to column 2, but when $\recurrenceDiam$, $\basecasefun_1$, and $\basecasefun_2$, respectively, are used as base case functions. 
}
\end{table*}
}

Note that, although Encoding~\ref{enc:rdnew} is more efficient than Encoding~\ref{enc:rdBiere}, the number of problems that were successfully bounded is less when using $\recurrenceDiam$ as a base case function compared to $\traversalDiam$, since $\recurrenceDiam$ can take exponentially longer to compute than $\traversalDiam$.
Figure~\ref{fig:boundcompareTDvsRD} shows that running time difference for problems successfully bounded using both base case functions.
Also Table~\ref{table:boundsDoms} shows this in terms of the numbers of problems bounded within 20 minutes, when using $\recurrenceDiam$ vs.\ $\traversalDiam$.

One thing we observe during our experiments is that many of the base case systems have traversal diameters that are 1 or 2.
We exploit that to improve the running time by devising a base case function that will only invoke the expensive computation of $\recurrenceDiam$ in case $\traversalDiam$ is greater than 2.
\renewcommand{\thennew}{\mbox{\upshape \textsf{thn}}}
\renewcommand{\elsenew}{\mbox{\upshape \textsf{els}}}
\begin{mydef}
\[   \basecasefun_1(\delta)=
    \begin{cases}
      \recurrenceDiam(\delta), & \text{if}\ 2 < \traversalDiam(\delta) \\
      \traversalDiam(\delta), & \text{otherwise}.
    \end{cases}
\]
\end{mydef}
Limiting the computation of $\recurrenceDiam$ as in the base case function $\basecasefun_1$ significantly reduces the bound computation time.
This leads to to more problems being successfully bounded as shown in column 4 of Table~\ref{table:boundsDoms}, compared to when $\recurrenceDiam$ is used.
The substantially improved running time is shown in Figure~\ref{fig:boundcompareTDvsRD}.
We also observe that bounds computed using $\recurrenceDiam$ as a base case function are exactly the same as those computed using $\basecasefun_1$, for all problems on which they terminate.
Thus, this improvement in running time does not come at the cost of looser bounds.
Indeed, we conjecture the following.
\begin{myconj}
\label{conj:tdeqrd112}
For any factored transition system $\delta$, if $\traversalDiam(\delta)\in\{0,1,2\}$, then $\traversalDiam(\delta)=\recurrenceDiam(\delta)$.
\end{myconj}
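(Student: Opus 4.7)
The inequality $\recurrenceDiam(\delta) \leq \traversalDiam(\delta)$ established by \citeauthor{abdulaziz:2019} reduces the conjecture to proving the reverse bound whenever $\traversalDiam(\delta)\in\{0,1,2\}$. I would proceed by case analysis, each time starting from a valid state $\state_0$ and action sequence $\as\in\ases{\delta}$ that witnesses the traversal diameter, writing the induced trace $\state_0,\state_1,\ldots,\state_m$ so that $\{\state_0,\ldots,\state_m\} = \sset(\state_0,\as)$, and then extracting a simple path of the required length in $\graph(\delta)$.

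The cases $\traversalDiam(\delta)=0$ and $\traversalDiam(\delta)=1$ are immediate. When $\traversalDiam(\delta)=0$, the empty action sequence already gives $\recurrenceDiam(\delta)\ge 0$. When $\traversalDiam(\delta)=1$, some trace visits two distinct states, hence contains a consecutive pair $(\state,\state')$ with $\state\neq\state'$; this pair is an edge of $\graph(\delta)$, and $[\state,\state']$ is a simple path of length $1$.

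The heart of the argument is the case $\traversalDiam(\delta)=2$: the trace visits exactly three distinct states $\{a,b,c\}$ with $a=\state_0$. I would first contract consecutive repetitions in the trace---these correspond to self-looping executions and are not edges of $\graph(\delta)$---to obtain a walk $a = v_0 v_1 \cdots v_m$ in $\graph(\delta)$ with $v_i\neq v_{i+1}$ and $\{v_0,\ldots,v_m\}=\{a,b,c\}$. Since $v_1\neq a$, relabel $b,c$ if needed so that $v_1=b$. Let $j$ be the smallest index at which $v_j = c$; then $j\ge 2$ and $v_i\in\{a,b\}$ for all $i<j$. If $v_{j-1}=b$, the edges $a\to b$ and $b\to c$ form the simple path $a\to b\to c$; otherwise $v_{j-1}=a$, and since $v_{j-2}\neq a$ we must have $v_{j-2}=b$, so $b\to a\to c$ is a simple path. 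Either way, $\recurrenceDiam(\delta)\ge 2$.

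The only delicate step is the $\traversalDiam=2$ case, and the main obstacle I anticipate is making the compression-of-self-loops step fully rigorous while still preserving the traversed set of states (in particular, showing that the compressed walk remains a walk in $\graph(\delta)$ of the required form). The case analysis above fundamentally exploits that only three vertex labels are available: Example~\ref{eg:tdexpltrd} with $\lotus_3$ witnesses a walk visiting four distinct states whose state space admits no simple path longer than $2$, so the argument cannot extend to $\traversalDiam(\delta)=3$ and the threshold $\{0,1,2\}$ in the conjecture is sharp.
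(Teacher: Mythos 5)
You should know up front that the paper does not prove this statement: it appears only as Conjecture~\ref{conj:tdeqrd112}, motivated by the empirical observation that the bounds computed with $\basecasefun_1$ coincided with those computed with $\recurrenceDiam$ on every instance where both terminated. There is therefore no proof in the paper to compare against, and your argument, if sound, settles the conjecture. Having checked it, I believe it is sound. The direction $\recurrenceDiam(\delta)\leq\traversalDiam(\delta)$ is exactly the cited result of Abdulaziz (2019), so only the lower bound on $\recurrenceDiam$ is at issue. The step you flag as delicate is in fact unproblematic: whenever $\state_{i}\neq\state_{i+1}$ are consecutive in a trace, $\state_{i+1}=\exec{\state_i}{\act}$ for some $\act\in\delta$, so the pair is a non-self-looping transition of $\graph(\delta)$, and deleting consecutive duplicates preserves both this property for the surviving consecutive pairs and the set of traversed states. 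The three-label case analysis is exhaustive ($j\geq 2$ since $v_0=a$ and $v_1=b$, and in the second sub-case $v_{j-2}$ exists, differs from $v_{j-1}=a$, and lies in $\{a,b\}$, forcing $v_{j-2}=b$), and a length-$2$ simple path in $\graph(\delta)$ converts back into a two-action sequence traversing three distinct states, which is what Definition~\ref{def:diamrd} requires. Your closing remark is also consistent with the paper: $\lotus_3$ of Example~\ref{eg:tdexpltrd} has $\traversalDiam=3$ but $\recurrenceDiam=2$, so the threshold $\{0,1,2\}$ cannot be raised. The one presentational improvement I would ask for is to state explicitly, rather than in a "proof plan" register, the lemma that every consecutive distinct pair of a trace is an edge and that edges of $\graph(\delta)$ are realised by actions; with that spelled out, this is a complete proof of something the authors only conjectured.
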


Note, however, that the number of problems successfully bounded when using $\basecasefun_1$ as a base case function is still less than the number of problems bounded using $\traversalDiam$.
This is because the large computation cost of $\recurrenceDiam$ on the base cases on which it is invoked is still much more than the cost of computing $\traversalDiam$.
Another technique to improve the bound computation time is to limit the computation of $\recurrenceDiam$ to problems whose state spaces' sizes are bound by a constant.
This is done with the following base case function.
\begin{mydef}
\[
\basecasefun_2(\delta) = 
\begin{cases}
  \basecasefun_1(\delta), & \text{if}\ 50<\expbound(\delta) \\
  \traversalDiam(\delta), & \text{otherwise}.
\end{cases}
\]
\end{mydef}
We set 50 as an upper limit on the state space size as more than 95\% of abstractions whose $\recurrenceDiam$ was successfully computed had values less than 50. 

As shown in column 5 of Table~\ref{table:boundsDoms}, the number of problems that are successfully bounded within 20 minutes when $\basecasefun_2$ is used as a base case function is substantially more than those when $\basecasefun_1$ is used, especially in the domains where $\recurrenceDiam$ and $\basecasefun_1$ were less successful than $\traversalDiam$.
However, the bounds computed when $\basecasefun_2$ is used are sometimes worse than those computed when $\basecasefun_1$ is used, like in the case of TPP.
Figure~\ref{fig:boundcompareTDvsRD} shows this bound degradation and bound computation time improvement for the problems on which both methods terminate.
Nonetheless, the bounds computed using $\basecasefun_2$ are still much better than $\traversalDiam$, as shown in Figure~\ref{fig:boundcompareTDvsRD}.
This is because there are abstractions whose recurrence diameter is computable within the timeout and whose state spaces have more than 50 states.
For those abstractions, $\traversalDiam$ is computed instead of $\recurrenceDiam$, when $\basecasefun_2$ is used.
An interesting problem is adjusting the threshold in $\basecasefun_2$ to maximise the number of abstractions whose recurrence diameter can be computed within the timeout.
We do not fully explore this problem here.

\paragraph{Using the bounds for SAT-based planning}

Table~\ref{table:boundsDoms} shows that the coverage of {\sc Mp} increases if we use, as horizons, the bounds computed with base case functions involving $\recurrenceDiam$, compared to when using $\traversalDiam$ as a base case function.
An exception is Zeno, where the better bound computation time using $\traversalDiam$ is decisive.
In addition to coverage, it makes sense to take a look into how the exact running times compare as many problems are solved using any of the bounds.
The plots at the bottom two rows right of Figure~\ref{fig:boundcompareTDvsRD} show the time needed for computing a plan when using different pairs of base case functions for problems on which both methods succeed.
The plots show that the planning time (excluding bound computation) using the tighter bounds is much smaller almost always, which is to be expected.
The other shows planning time including bound computation.
Interestingly, although the bound computation time is more, e.g.\ when using $\basecasefun_1$ than $\traversalDiam$, tighter bounds always payoff for problems that need more than 5 seconds to solve.
Thus, time spent computing better bounds before planning is almost always well-spent.

 \section{Conclusion}

The recurrence diameter was identified by many authors as an upper bound on transition sequence lengths in the areas of verification~\cite{baumgartner2002property,KroeningS03,kroening2011linear} and AI~planning~\cite{abdulaziz2015verified,icaps2017,abdulaziz:2019}.
However, previous authors noted that computing it is not practically useful, as it can take exponentially longer than solving the underlying planning or model checking problem.
Nonetheless, we show that, indeed, computing the recurrence diameter is useful when used for compositional bounding.
We do so using a SMT encoding that exploits the factored state space representation, and by combining the recurrence diameter with the traversal diameter, which is an easier to compute topological property.

Broad directions for future work include \begin{enumerate*} \item devising methods to calculate base case functions that are tighter than the recurrence diameter and \item devising compositional methods that can compute allow better problem decompositions than those by \citeauthor{abdulaziz2015verified}~\citeyear{abdulaziz2015verified,icaps2017}.\end{enumerate*}
A third more interesting direction is devising methods that use \emph{radius} concepts for compositional bounding, which are the topological properties corresponding to the different diameter concepts (i.e.\ the diameter, the traversal diameter, and the recurrence diameter) that consider paths starting only at the initial state.
Using these radius concepts for bounding has the potential to boost the efficiency of bound computation as well as bound tightness.
A significant challenge is that the theory justifying existing compositional bounding methods fully relies on the fact that the base case function is a diameter concept (in particular, the \emph{sublist diameter})~\cite{abdulaziz2015verified}.

\section{Acknowledgements}
We would like to thank Dr.\ Charles Gretton for the very helpful discussions and comments on this paper.
We also thank the anonymous reviewers whose comments helped improve this paper.
We also thank the German Research Foundation for funding that facilitated this work through the DFG Koselleck Grant NI 491/16-1.
\bibliography{short_paper}
\end{document}